\let\csname equation*\endcsname\relax
\let\csname endequation*\endcsname\relax
\newcommand{\RR}{\mathds{R}}
\newcommand{\dsum}{\displaystyle\sum}
\newcommand{\Rad}[1]{\mathcal{R}\left( #1 \right)}
\newcommand{\D}[1]{\,\mbox{d} #1} 
\newcommand{\Xiom}{\xi_\omega}
\newcommand{\XiomT}{(\cos\omega,\sin\omega)}
\newcommand{\JC}{\mathcal{J}}
\newcommand{\JCR}{\mathcal{J}_R}
\newcommand{\DC}{\mathcal{D}}
\newcommand{\SC}{\mathcal{S}}
\newcommand{\RC}{\mathcal{R}}
\newcommand{\VC}{\mathcal{V}}
\newtheorem{theorem}{Theorem}
\newtheorem{definition}{Definition}
\newtheorem{lemma}{Lemma}
\DeclareMathOperator{\trace}{trace}
\DeclareMathOperator{\jac}{D}
\DeclareMathOperator{\hess}{D^2}
\begin{document}

\title[Methods based on Radon transform for non-affine DIR of noisy images]{ Methods based on Radon transform for non-affine deformable image registration of noisy images}

\author{
	Daniel E. Hurtado$^{1,2}$, Axel Osses$^{3,4}$ and Rodrigo Quezada$^{3,5,\text{\footnote{Corresponding author}}}$
 }
\address{$^1$ Department of Structural and Geotechnical Engineering, School of Engineering, Pontificia Universidad Católica de Chile, Santiago, Chile.} 
\address{$^2$ Institute for Biological and Medical Engineering, Pontificia Universidad Católica de Chile, Santiago, Chile.}
\address{$^3$ Departamento de Ingeniería Matemática, Universidad de Chile, Santiago, Chile. }
\address{$^4$ Centro de Modelamiento Matemático, UMI 2807, Universidad de Chile-CNRS, Santiago, Chile.}
\address{$^5$ Departamento de Producción Vegetal, Facultad de Agronomía, Universidad de Concepción, Chillán, Chile. 
}

\ead{rodrigoaquezada@udec.cl
}

\begin{quote}
    
\end{quote}
\vspace{10pt}
\begin{indented}
\item[] August 2024 %(minor update March 2024)
\end{indented}

\begin{abstract}
Deformable image registration is a standard engineering problem used to determine the distortion experienced by a body by comparing two images of it in different states. This study introduces two new DIR methods designed to capture non-affine deformations using Radon transform-based similarity measures and a classical regularizer based on linear elastic deformation energy. It establishes conditions for the existence and uniqueness of solutions for both methods and presents synthetic experimental results comparing them with a standard method based on the sum of squared differences similarity measure. These methods have been tested to capture various non-affine deformations in images, both with and without noise, and their convergence rates have been analyzed. Furthermore, the effectiveness of these methods was also evaluated in a lung image registration scenario.
\end{abstract}

%
% Uncomment for keywords
\vspace{2pc}
\noindent{\it Keywords}: Deformable image registration, DIR, non-affine registration, similarity measure, Radon transform, lung registration.
%
% Uncomment for Submitted to journal title message
%\submitto{\JPA}
%
% Uncomment if a separate title page is required
%\maketitle
% 
% For two-column output uncomment the next line and choose [10pt] rather than [12pt] in the \documentclass declaration
%\ioptwocol
%

\section{Introduction}
\label{sec:Introduction}

The Deformable Image Registration (DIR) problem is a mathematical problem that allows the computation of the deformation field experienced by a body using two images in different states. Given two images, R (reference) and T (template), of a body before and after deformation, and a deformation model with desirable mechanical properties,
the DIR problem can be defined as follows: 
\begin{eqnarray}\fl \qquad
\mbox{\it Find the deformation } u:\Omega\subset\RR^2\to\RR^2 \mbox{ \it such that } T_u:=T\circ \{id+u\} \simeq R,
\end{eqnarray}
where $\Omega$ is the image domain and $id$ corresponds to the identity function. \nocite{ruthotto2015non}

The DIR problem has been extensively utilized in the field of biomedicine for a multitude of applications, including anatomy segmentation, assisted surgery, and image fusion. These applications have been employed to analyze the diverse structures of the human body, such as different organs \cite{oliveira2014medical}.
Especially in biomedicine, the DIR problem has been widely used for applications such as anatomy segmentation, assisted surgery, and image fusion, applying them to the different human organs \cite{oliveira2014medical}. 
For instance, the DIR problem has gained significant importance in recent decades due to the rising prevalence of lung diseases associated with lung stiffness. This task has presented a significant challenge due to the large and non-affine deformations that this organ exhibits.
Consequently, it is imperative to develop novel DIR methods to compute large non-affine deformations, which will facilitate further technological advancement in the early detection and treatment of lung diseases 
\cite{concha2018micromechanical,Cruces2019mapping,hurtado2017spatial,retamal2018does}.

The literature highlights a variational approach to the DIR problem, which consists of finding the deformation field $u$ by minimizing the functional given by Equation \eqref{eq:DIR Problem},
where $\mathcal{S}$ is a measure of the similarity between the images $R$ and $T$, $\mathcal{R}$ is a regularizer, and $\alpha$ is the corresponding regularization parameter
\cite{aubert2006mathematical, vese2016variational, oektem2017shape, barnafi2018primal, chen2018indirect}. 
The use of intensity-based similarity measures has become widespread in recent research due to their simplicity, speed, and the fact that they do not require segmentation
\cite{maintz1998survey,oliveira2014medical}.
Of particular interest is the sum of squared differences (SSD) similarity measure, which is defined as the $L^2$ norm of the difference between images and is straightforward to implement.
To prevent the acquisition of suboptimal local solutions, it is imperative to incorporate a robust regularizer into the model.
A regularization based on elastic energy in the DIR model, such as Linear Elastic Energy (LEE), is essential for obtaining physically consistent displacements
\cite{BAJCSY19891}. 
The method using the SSD similarity measure and the LEE regularizer has been utilized in numerous studies to solve the DIR problem
\cite{ashburner1999nonlinear,brown1992survey, capek1999optimisation,modersitzki2004numerical}. 
In practice, this approach works well for capturing small and noise-free registrations. However, it is less suitable for large deformations or images with significant noise  \cite{BAJCSY19891}.

Registration between noisy images presents a significant challenge for intensity-based similarity measures. The presence of noise can generate false matches or affect gradient computation in optimization methods, potentially leading to amplified values
\cite{BAJCSY19891,cain2001projection}. 
Consequently, a variety of registration methods have incorporated techniques to mitigate the effects of noise, including projection-based similarity measures which serve to average out the noise in images \cite{albu2014transformed,BAJCSY19891,cain2001projection,khamene2006novel,sauer1996efficient}.
Some two-dimensional DIR works have incorporated projections along the rows and columns of the images in order to reduce the effects of noise in the registration process \cite{albu2014transformed,sauer1996efficient}. Others have reported similar effects over the noise when the Radon transform is incorporated into the DIR problem formulation for computing linear displacements \cite{mooser2013estimation,mooser2009estimation,nacereddine2015similarity}.
In the context of affine deformation, several studies have proposed a direct registration approach based on X-ray measurements using the linearity properties of  the Radon transform, by directly comparing the sinograms of $R$ and $T_u$ \cite{JiangshengYou1998,Mao2007a,mooser2009estimation,mooser2013estimation,yan2005edge}.
A review of the literature reveals no evidence that similar techniques have been employed for the registration of non-affine deformations.

This paper presents two new Radon-based similarity measures that are expected to be useful for solving the DIR problem.
The first method is computed in the Radon space by comparing the Radon transforms of the images. The second method is computed in the image domain, but by comparing the backward projection of the Radon transforms of the images instead of the images themselves. The existence and uniqueness of the solutions of both proposed methods are analyzed. Additionally, the performance of both similarity measures for registering noisy images or images with large deformations, including a lung case between noisy images, has been included.

This paper is organized as follows:
In Section 2, a review of the mathematical settings necessary to formulate the DIR problem is presented.
The proposed DIR methods are described in Section 3, together with an analysis of the existence and uniqueness conditions of the solutions to the DIR problem in Section 4. Section 5 covers the computational considerations necessary to implement the method. Numerical tests are presented in Section 6 using several examples. Finally, Section 7 discusses the strengths, scope, and limitations of the methods, as well as possible extensions of this work.

\section{DIR Problem}
\label{sec:DIRProblem}

In a general, the DIR problem can be defined as the task of identifying an optimal geometric transformation, denoted by $u$, between two images, $R$ and $T$, in different states of a given object. This transformation must satisfy the following condition: $R(x)\approx T(x+u(x))$, for all $x$ in the image domain.

In the variational framework, a regularized version of the DIR problem is given by: 
Given two images $R:\Omega\to\RR$, $T:\tilde\Omega\to\RR$, both in $H^1$,
\textit{find $u:\Omega\to\RR^2$ such that:}
\begin{equation}
\JC(u):=\DC(T,R;u)+\alpha\SC(u)\,\stackrel{u}{\to}\,\min , 
\label{eq:DIR Problem} 
\end{equation}
\textit{where $\Omega\subseteq\tilde\Omega \subseteq \RR^2$ are  image domains,  $\DC$ is a similarity measure depending on  $R,$ $T,$ and $u$; and $\SC$ is a regularization term  with regularization parameter $\alpha>0$}   \cite{modersitzki2004numerical}.

It is important to note that the domain of $T$ must be larger than the domain of $R$, because when the domain of $T$ is deformed, there are points of $\Omega$ that could be transformed out of $\Omega.$
In practice, when the DIR problem is solved, the same image domain $\Omega$ is considered, where the image $T_u:\tilde\Omega|_\Omega\to\RR$ is restricted to $\Omega.$.
This implies that if $u$ is replaced by $u_M$, where 
$$u_M(x)=
\begin{cases}
u(x),&\quad |u(x)|<M\\
0,&\quad |u(x)|\geq M,
\end{cases}$$
for $M$ sufficiently large, then $T_u=T_{u_M}.$ Then it makes sense to assume that $u$ is bounded.

\begin{figure}
	\centering
	\includegraphics[width=0.3\linewidth]{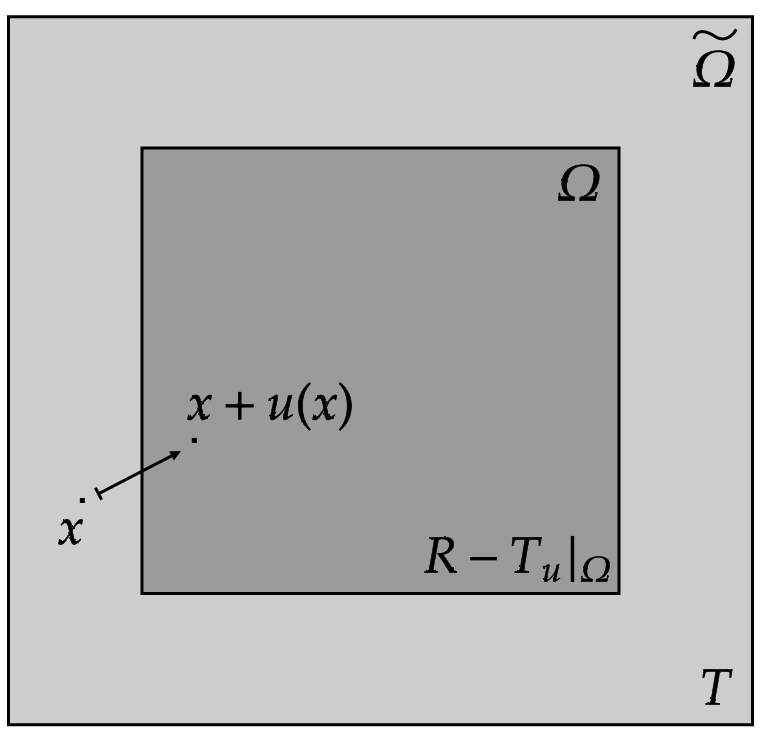}
	\caption{
		The figure illustrates the $\Omega$ and $\tilde\Omega$ domains of the images $R$ and $T,$ respectively. In addition, it is shown how a point $x \in \Omega$ could be transported by the vector field $u$ into the $\Omega$ domain.
 }
	\label{fig:diagram}
\end{figure}

Due to its mathematical simplicity of definition and implementation, SSD is a popular choice to solve the DIR problem. The definition of SSD is as follows:
\begin{align}
\DC_{SSD}(R,T;u) &:=\dfrac{1}{2}\int_{\Omega } \big(\,T_u(x)-R(x) \,\big)^2\D{x}, \label{eq:DSSD}
\end{align}
where $R,~T$ and $u$ are the same as before.
Similarly, the LEE regularizer is a typical choice of regularizer to solve the DIR problem. It has desirable physical properties that contribute to the solution of the DIR problem. It is defined as follows:
\begin{equation}
\SC(u):=\dfrac{1}{2}\int_\Omega \mathds{C}\,\varepsilon(u):\varepsilon(u) \D{x},\label{eq:LE} 
\end{equation} 
where $\varepsilon(u):=\frac{1}{2}\left(\nabla u +\nabla u^t\right)$ is the infinitesimal strain tensor  and  $\mathds{C}$ is the elasticity tensor, that for an isotropic solid is defined by 
\begin{equation}
\mathds{C}\,\varepsilon  := \lambda\, tr( \varepsilon ) \,\textbf{I} +2\, \mu\, \varepsilon,
\label{eq:ElasticityTensor}
\end{equation}
 where $\lambda,$ $\mu$ are the  constants of Lame.

The DIR method, employing the SSD similarity measure and linear elastic energy, has been examined in \cite{barnafi2018primal}. It was demonstrated that a solution exists under specific reasonable assumptions about the images.

%%%%%%%%%%%%%%%%%%%%%%%%%%%%%%%%%%
%%%%%%%%%%%%%%%%%%%%%%%%%%%%%%%%%%
%%%%%%%%%%%%%%%%%%%%%%%%%%%%%%%%%%
%%%%%%%%%%%%%%%%%%%%%%%%%%%%%%%%%%

\section{Similarity measures based on the Radon transform}
\label{sec:ProposedMethod}

The present paper introduces two variational techniques for the resolution of the DIR Problem \eqref{eq:DIR Problem} using two novel similarity measures in conjunction with the well-established LEE regularizer.

In order to introduce the proposed methods, it will be necessary to present some tools which will allow the definition of new similarity measures to solve the DIR problem. These tools include the Radon transform, its adjoint transform, and the sinogram.
Let $\Omega=(-a,a)^2$, with $a>0$. Let $f\in L^2(\Omega)$ be an image and $g\in L^2(S^1\times [-a,a])$. The Radon transform of $f$ and the adjoint of the Radon transform of $g$, respectively, 
are defined as continuous linear operators as follows:
\[
\mathcal{R}: L^2(\Omega) \to L^2(S^1\times\RR)
\quad \mbox{ and } \quad
\mathcal{R}^\#:L^2(S^1\times \RR )\to L^2(\RR^2).
\]

The\textit{ Radon transform of} $f$ is defined as $\Rad f :S^1\times \RR \to \RR$, where:
\begin{equation}
\Rad f [\Xiom,s] = \int_{ \Omega} f(x)\,\delta_0(x\cdot \Xiom -s) \,\mbox{d}x, \label{eq:RadonTransform}
\end{equation}
where $S^1=\{\Xiom = \XiomT | \omega \in (0,\pi) \},$ and $\delta_0$ is the Dirac delta at the origin.
In the same way, the adjoint of the Radon transform is $\mathcal{R}^\#(g)[x]: \Omega\to\RR$ where:
\[\mathcal{R}^\#(g)[x]:=\int_{S^1} g (\theta,x\cdot\theta) \D{\theta}.\]
 $\mathcal{R}^\#$ is also called back-projection operator.
In the two-dimensional setting, a geometric interpretation of the Radon transform of the function $f$ with arguments $[\Xiom,s]$ corresponds to the integral along a straight line that is located at a distance $s$ from the origin of the coordinate system over the image domain and at an angle $\Xiom^\perp$. The adjoint of the Radon transform is interpreted as the sum of all line integrals passing through the point $x$.
Furthermore, the term \textit{sinogram} is introduced, which corresponds to a discrete visual representation of the Radon transform as an image, organized in a matrix arrangement according to the angle $\theta$ and the distance $s$. 

This study proposes two new similarity measures called $\DC_{RSSD}$ and $\DC_{R\#SSD},$ defined by:
\begin{align}
\DC_{RSSD}(R,T;u )&:=\dfrac{1}{2}\int_{S^1\times \RR } \left(\,\RC(T_{u})[\Xiom,s]-\RC(R) [\Xiom,s]\,\right)^2\D{\Xiom\D{s}}, \label{eq:DFR}
\end{align}
where   $\RC$ denotes the Radon transform and $u,R,T$ are the same as before; and
$\DC_{RSSD}$ and $\DC_{R\#SSD},$ defined by:
\begin{align}
	\DC_{R^\#RSSD}(R,T;u )&:=\dfrac{1}{2}\int_{\Omega} \left(\,\RC^\#(\RC(T_{u}(x)))-\RC^\#(\RC(R(x))) \,\right)^2\D{x}. \label{eq:DFRG}
\end{align}
It is important to note that the Radon transform of an image, denoted by $\RC(I)$, can be understood computationally as the sinogram of the image $I$.
Similarly, $\RC^\#(\RC(I))$ can be understood as the back-projection of the sinogram $\RC(I)$, also called the pseudoinverse of $I.$
However, it is crucial to emphasise that $I$ and $\RC^\#(\RC(I))$ are both images of the same dimension, but they are not the same image.

In this way, the proposed DIR problems are as follows:  \newline
\textit{Find $u\in H^1(\Omega)$ such that:}
\begin{equation}
\JC_{RSSD}(u):=\DC_{RSSD}(R,T;u)+\alpha_{RSSD} \SC(u) \,\to\,\min,
\label{eq:ProposedDIRProblem}
\end{equation}
and,
\begin{equation}
	\JC_{R^\#RSSD}(u):=\DC_{R^\#RSSD}(R,T;u)+\alpha_{R^\#RSSD} \SC(u) \,\to\,\min
	\label{eq:ProposedDIRProblem2}
\end{equation}
where $\SC$ is the LEE regularizer \eqref{eq:LE}, $\alpha_{RSSD}$ and $\alpha_{R^\#RSSD}$ are  regularization constants.

%Domingo 24 este si

%%%%%%%%%%%%%%%%%%%%%%%%%%%%%%%%%%
%%%%%%%%%%%%%%%%%%%%%%%%%%%%%%%%%%
%%%%%%%%%%%%%%%%%%%%%%%%%%%%%%%%%%
%%%%%%%%%%%%%%%%%%%%%%%%%%%%%%%%%%
\section{Conditions for the existence and uniqueness of the solution}
\label{sec:Existence}

In the present section, certain properties are demonstrated in order to determine the conditions for the existence and uniqueness of the solution to the continuous DIR problems given in equations \eqref{eq:ProposedDIRProblem} and \eqref{eq:ProposedDIRProblem2}. 
These conditions are a direct consequence of the result presented by Barnafi \textit{et.al.} (2018) in \cite{barnafi2018primal}. 
The authors employed the Euler-Lagrange equations to derive a weak formulation for the DIR problem, analogous to that in Equation \eqref{eq:DIR Problem}, which enabled them to determine the existence and uniqueness of solutions through the application of classical functional analysis techniques and under small data assumptions.

Before starting, it is necessary to define the functional space that will be employed.
\begin{definition}
Let us define the set $H$ as follows: $H:=\{u\in H^1(\Omega):\varepsilon(u)=0\}^\perp$, with $\Omega\subset \RR^2$.
\label{def:H}
\end{definition}

The following lines present a summary of the theorem presented by Barnafi \textit{et.al.} (2018). 
\begin{theorem}[Barnafi et al. (2018) \cite{barnafi2018primal}] \label{thm:Barnafi}
Let $\,\mathcal T: H \rightarrow  H$ be an operator with $\mathcal T(z) = u$, where for each $z\in H$, $u$ is the solution of the problem:\\
\textit{Find $u \in  H$ such that}
\begin{equation}\label{eq:WeakDIRProblemNoUnique}
	a(u,v)=\hat\alpha F_z(v), \qquad\forall v\in H,
 \end{equation}
where $\hat\alpha$ is the regularization parameter associated to the DIR problem, $F_z\in H',$ and $a$ is the continuous and non-negative bilinear form in $H^1(\Omega)$, given by
\begin{equation}
	a(u,v):=\int_\Omega \mathbb{C}\varepsilon(u):\varepsilon(v) \D{x}, \label{eq:regularizator}
\end{equation}
where $\mathbb C $ was defined in \eqref{eq:ElasticityTensor}.
	
Under the following data assumptions:
\begin{description}
		\item[H1] $F_u$ is Lipschitz with  respect to $u\in H$, \textit{i.e.},
		$\|F_u-F_v\|_{H^\prime}\leq L_F\|u-v\|_{H}, \forall u,v\in H,$
		\item[H2] $F_u$ is bounded in $H'$, \textit{i.e.},
		$\|F_u\|_{H^\prime}\leq M_F,\quad \forall u\in H,$
	\end{description}
then the operator $\mathcal T$ has at least one fixed point. 
Furthermore, if $\hat\alpha  C L_F < 1$, where $C$ is a constant coming from an \textit{a priori} estimation given by \cite[Theorem 2]{barnafi2018primal}, the fixed point is unique.
	\label{th:Barnafi2018}
\end{theorem}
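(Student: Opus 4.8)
The plan is to read \eqref{eq:WeakDIRProblemNoUnique} as a linear elasticity problem parametrized by $z$ and to combine Lax--Milgram with a topological fixed-point argument. First I would check that $\mathcal{T}$ is well defined. Since by Definition~\ref{def:H} the space $H$ is the orthogonal complement of the kernel $\{u:\varepsilon(u)=0\}$ (the infinitesimal rigid displacements), Korn's inequality supplies a constant $c_K>0$ with $\|\varepsilon(u)\|_{L^2}^2\geq c_K\|u\|_{H^1}^2$ for all $u\in H$; together with the positivity of the Lamé constants ($\mu>0$, $\lambda\geq0$) this upgrades the merely non-negative form $a$ to a coercive one on $H$, namely $a(u,u)\geq 2\mu c_K\|u\|_{H^1}^2$. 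As $a$ is also continuous and $F_z\in H'$, the Lax--Milgram theorem produces for each $z\in H$ a unique solution $u=\mathcal{T}(z)\in H$, with the a priori bound (the source of the constant $C$ in the statement)
\[
\|\mathcal{T}(z)\|_H\leq \hat\alpha\,C\,\|F_z\|_{H'}.
\]

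Next I would extract the two properties that feed the fixed-point theorem. Using H2 in the a priori bound gives $\|\mathcal{T}(z)\|_H\leq \hat\alpha C M_F=:\rho$ for every $z$, so $\mathcal{T}$ maps the closed, bounded, convex ball $\overline{B}_\rho\subset H$ into itself. For continuity I would subtract the equations defining $u_i=\mathcal{T}(z_i)$, obtaining $a(u_1-u_2,v)=\hat\alpha\,(F_{z_1}-F_{z_2})(v)$ for all $v\in H$, test with $v=u_1-u_2$, and use coercivity together with H1 to reach
\[
\|\mathcal{T}(z_1)-\mathcal{T}(z_2)\|_H\leq \hat\alpha\,C\,L_F\,\|z_1-z_2\|_H,
\]
so $\mathcal{T}$ is Lipschitz, hence continuous.

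For the existence of at least one fixed point I would invoke Schauder's fixed-point theorem on $\overline{B}_\rho$. The remaining ingredient, and the step I expect to be the main obstacle, is the compactness of $\mathcal{T}$: this does \emph{not} follow from H1 and H2 alone, since a bounded Lipschitz self-map of a ball in an infinite-dimensional space may fail to have a fixed point. Compactness must instead be drawn from the concrete structure, for instance from elliptic regularity of the linear elasticity operator (which turns $L^2$-type data into an $H^2$ solution) combined with the compact Sobolev embedding $H^2(\Omega)\hookrightarrow\hookrightarrow H^1(\Omega)$, or, equivalently, from the compactness that the embedding $H^1(\Omega)\hookrightarrow\hookrightarrow L^2(\Omega)$ builds into the map $z\mapsto F_z$ through the composition $T_z=T\circ\{id+z\}$. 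Once $\mathcal{T}$ is shown to send bounded sets to relatively compact ones, Schauder yields a fixed point $u^\ast=\mathcal{T}(u^\ast)$, which is a solution of the DIR problem.

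Finally, for uniqueness I would reuse the Lipschitz estimate: under the smallness condition $\hat\alpha\,C\,L_F<1$ the operator $\mathcal{T}$ becomes a contraction on the complete metric space $H$, so the Banach fixed-point theorem gives exactly one fixed point, which completes the argument.
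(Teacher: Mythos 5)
Your proposal follows essentially the same route the paper attributes to this theorem: the paper does not reprove it but defers to Barnafi et al.\ (2018), describing the argument as resting on Schauder's fixed-point theorem, Lax--Milgram and Korn inequalities, and small-data assumptions --- precisely the skeleton you reconstruct (Korn/Lax--Milgram for well-posedness of $\mathcal{T}$ and the a priori constant $C$, Schauder for existence, Banach contraction for uniqueness when $\hat\alpha C L_F<1$). Your flag that compactness of $\mathcal{T}$ cannot follow from \textbf{H1}--\textbf{H2} in the abstract and must be drawn from the concrete structure (the $L^2$-type Lipschitz bound on $z\mapsto F_z$, visible in the paper's own estimate $\|f_{u_1}-f_{u_2}\|_{L^2(\Omega)}\leq L_F\|u_1-u_2\|_{L^2(\Omega)}$, combined with the compact embedding $H^1(\Omega)\hookrightarrow L^2(\Omega)$) is a correct and genuine subtlety, and it is exactly how the cited reference closes that gap.
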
 

The proof of this theorem employs a variety of tools, including Schauder's fixed point theorem, inequalities derived from the Lax-Milgram lemma and Korn's inequality, along with other standard results from functional analysis, under small data assumptions. For further details the reader can consult the reference \cite{barnafi2018primal}.

It is important to note that the Problems \eqref{eq:ProposedDIRProblem} and \eqref{eq:ProposedDIRProblem2} can be redefined according with the formulation presented in \cite{barnafi2018primal}. For this, first notice that
the aforementioned Problems \eqref{eq:ProposedDIRProblem} and \eqref{eq:ProposedDIRProblem2} can be redefined  as follows:

\textit{Find $u\in H^1(\Omega),$ such that:}
\begin{equation}
\hat\JC_{RSSD}(u):=\SC(u)+\hat\alpha_{RSSD}\DC_{RSSD}(R,T;u)\,\to\,\min,
\label{eq:PropsedDIRProblemAdapted}
\end{equation}
where $\hat\alpha_{RSSD}:=\frac{1}{\alpha_{RSSD}}$. And,

\textit{Find $u\in H^1(\Omega),$ such that:}
\begin{equation}
\hat\JC_{R^\#RSSD}(u):=\SC(u)+\hat\alpha_{R^\#RSSD}\DC_{R^\#RSSD}(R,T;u)\,\to\,\min,
\label{eq:PropsedDIRProblemAdapted2}
\end{equation}
where $\hat\alpha_{R^\#RSSD}:=\frac{1}{\alpha_{R^\#RSSD}}$.

This is easy to see by dividing the functional $\JC_{RSSD}$, as defined in Problem \eqref{eq:ProposedDIRProblem}, by the constant $\alpha_{RSSD}$, and defining  $\hat\alpha_{RSSD}:=\frac{1}{\alpha_{RSSD}}$, the desired result is obtained.
The equivalence for Problem \eqref{eq:ProposedDIRProblem2} is analogous.

To study the existence and uniqueness of the solutions of the proposed Problems \eqref{eq:PropsedDIRProblemAdapted}  and \eqref{eq:PropsedDIRProblemAdapted2}, the existence and uniqueness of fixed points of the operators associated to these problems are analyzed by verifying the hypotheses {$\mathbf{H1}$} and {$\mathbf{H2}$} of the Theorem \ref{th:Barnafi2018}, and consequently it is verified that these fixed points are indeed local minima of the given problems.

Regarding the images and the solution space, let $\Omega = (-a,a)^2\subset\RR^2$ be the image domain with center $O=(0,0)$, $H^1(\Omega)$ be a reasonable solution space to search for a deformation field, and  $R$ and $T$ are two image functions in $C^2(\bar\Omega),$ with  $T,$ and $\nabla T$ being Lipschitz and bounded functions. 
These Lipschitz assumptions are reasonable hypotheses according to the approximation methods described in detail in \cite[pg. 2534]{barnafi2018primal}.

\begin{lemma}
 If $T$ and $\nabla T$ are Lipschitz and bounded functions then
$R-T_u$ and $\nabla T_u$ are also Lipschitz functions with respect to the variable $u$.
\end{lemma}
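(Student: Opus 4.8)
The plan is to reduce both claims to the pointwise Lipschitz bounds on $T$ and $\nabla T$ by exploiting the composition structure $T_u(x)=T(x+u(x))$, and then integrate over $\Omega$ to obtain Lipschitz estimates in the $L^2$ norm with respect to $u$.

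First I would handle $R-T_u$. Since $R$ is independent of $u$, for any two fields $u,v$ one has $(R-T_u)-(R-T_v)=T_v-T_u$, so it suffices to control $\|T_u-T_v\|$. Using that $T$ is Lipschitz with constant $L_T$, the pointwise estimate $|T(x+u(x))-T(x+v(x))|\le L_T\,|u(x)-v(x)|$ holds for every $x\in\Omega$; squaring and integrating then gives $\|T_u-T_v\|_{L^2}\le L_T\|u-v\|_{L^2}$, which is the desired bound.

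Next I would treat $\nabla T_u$, understood as the composition $(\nabla T)\circ(id+u)$ --- precisely the quantity that appears in the force term of the Euler--Lagrange equations for the SSD-type functionals. The argument is identical in form: the Lipschitz property of $\nabla T$ with constant $L_{\nabla T}$ yields $|(\nabla T)(x+u(x))-(\nabla T)(x+v(x))|\le L_{\nabla T}\,|u(x)-v(x)|$ pointwise, and integration delivers the corresponding $L^2$ Lipschitz bound. The boundedness hypotheses on $T$ and $\nabla T$ are not needed for the Lipschitz conclusions themselves, but they guarantee that the composed functions remain in $L^2(\Omega)$, so that the estimates are meaningful.

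The main point requiring care is the interpretation of $\nabla T_u$: if one instead read it as the full chain-rule gradient $\nabla(T_u)(x)=(\nabla T)(x+u(x))\,(I+\nabla u(x))$, the factor $I+\nabla u$ would couple the estimate to $\nabla u$ and the clean pointwise argument would break down, since $\nabla u$ is only controlled in $L^2$ and not pointwise. Reading $\nabla T_u$ as the composition $(\nabla T)\circ(id+u)$ avoids this difficulty entirely. The only remaining technical check is that $x+u(x)$ stays in the domain $\tilde\Omega$ of $T$ for the compositions to be well defined; this is ensured by the boundedness of $u$ (via the truncation $u_M$ introduced earlier) together with the inclusion $\Omega\subseteq\tilde\Omega$.
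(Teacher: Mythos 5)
Your proposal is correct and follows essentially the same argument as the paper: both exploit the composition structure $T_u(x)=T(x+u(x))$ to get the pointwise bounds $|T(x+u_1(x))-T(x+u_2(x))|\le Lip(T)\,|u_1(x)-u_2(x)|$ and likewise for $\nabla T$, with $\nabla T_u$ read as the composition $(\nabla T)\circ(id+u)$ exactly as the paper does. Your additional remarks (integrating to get the $L^2$ estimate, and why the chain-rule reading of $\nabla T_u$ would break the argument) go slightly beyond the paper's purely pointwise proof but are consistent with how the lemma is used there.
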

\begin{proof}
 Let $u_1,u_2\in H$ and $x\in\Omega.$ Since $T$ and $\nabla T$ are assumed to be Lipschitz and bounded functions, it follows that:
\begin{align}
|(T_{u_1}-R)(x) - \,(T_{u_2}-R)(x) |
&=| T(\,x+{u_1}(x)\,)-T(\,x+u_2(x)\,)|\nonumber \\
&\leq Lip( T)| (x+{u_1}(x))-(\,x+u_2(x))| \nonumber\\
&\leq Lip( T) | u_1(x)-{u_2}(x)|,~\forall x\in\Omega~a.e.  \label{eq:LipT}
\end{align}
and
\begin{align}
|\nabla T_{u_1}(x) - \nabla T_{u_2}(x) |
&=|\nabla  T(x+{u_1}(x)) - \nabla T(x+{u_2}(x)) |\nonumber\\
&\leq Lip(\nabla  T)|(x+{u_1}(x)) - (x+{u_2}(x))|\nonumber\\
&\leq Lip(\nabla  T)|{u_1}(x) - {u_2}(x)|, ~\forall x\in\Omega~a.e.  \label{eq:LipGradT}
\end{align}
\end{proof}

Since $R,T\in \mathcal{C}^2(\bar \Omega)$, it can be shown that the Radon transforms of $R$ and $T$ are well defined. Consequently, the similarity measure of $D_{RSSD}$ is well defined by \eqref{eq:DFR}. Using the adjoint of the Radon transform, $D_{R^\#RSSD}$ is also well defined by \eqref{eq:DFRG}.

In the following, the subscripts of $\hat\JC_{RSSD}$, $\hat\JC_{R^\#RSSD}$, $\DC_{RSSD}$, $\DC_{R^\#RSSD}$, $\hat\alpha_{RSSD}$  and $\hat\alpha_{R^\#RSSD}$ may be omitted. In such a case, the context will make it clear which problem is being referred to.

%%% reordenamiento
\begin{lemma} The solutions of the Problems \eqref{eq:PropsedDIRProblemAdapted} and \eqref{eq:PropsedDIRProblemAdapted2}
can be written as fixed points of the equation \eqref{eq:WeakDIRProblemNoUnique}, where for each 
$z\in H$, $F_z$ is defined as follow:
\begin{equation}\label{eq:F}
F_z(v):=-\langle f_z,v \rangle_{L^2(\Omega)}, \quad \forall v\in H,
\end{equation}
where  $f_z$  is the nonlinear function:
\begin{equation}\label{eq:fu1}
f_z(x):=\mathcal{R}^\#[\mathcal{R}( T_z-R)](x) \nabla T_z(x),
\end{equation}
for the Problem \eqref{eq:PropsedDIRProblemAdapted}, and
\begin{equation}\label{eq:fu2}
f_z(x):=\mathcal{R}^\#(\mathcal{R} (\mathcal{R}^\#[\mathcal{R}(T_z-R)])) \nabla T_z(x),
\end{equation}
for the Problem \eqref{eq:PropsedDIRProblemAdapted2}.
\end{lemma}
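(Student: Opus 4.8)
The plan is to compute the first-order optimality condition (the Euler--Lagrange equation) for each minimization problem and to recognize it as the weak equation \eqref{eq:WeakDIRProblemNoUnique} once the nonlinear term is frozen at a fixed argument. A minimizer $u\in H$ of $\hat\JC(u)=\SC(u)+\hat\alpha\,\DC(R,T;u)$ must satisfy $D\hat\JC(u)[v]=0$ for every admissible direction $v\in H$, so the whole argument reduces to evaluating the Gâteaux derivatives of $\SC$ and of the two similarity measures and matching the outcome against the definitions \eqref{eq:F}--\eqref{eq:fu2}.

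First I would differentiate the regularizer: using the symmetry of the elasticity tensor $\mathds{C}$, one obtains $D\SC(u)[v]=\int_\Omega\mathds{C}\,\varepsilon(u):\varepsilon(v)\D{x}=a(u,v)$, which supplies the bilinear form on the left-hand side of \eqref{eq:WeakDIRProblemNoUnique}. The key common ingredient for the similarity terms is the chain rule for the warped image: since $T_u=T\circ\{id+u\}$, differentiating in the direction $v$ gives $\frac{d}{dt}\big|_{t=0}T_{u+tv}(x)=\nabla T_u(x)\cdot v(x)$. The preceding lemma, together with the $C^2$ and boundedness hypotheses on $T$, is precisely what licenses differentiating under the integral sign at this step.

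For $\DC_{RSSD}$ I would exploit the linearity of $\RC$ to move the derivative inside the transform, obtaining $D\DC_{RSSD}(u)[v]=\langle \RC(T_u-R),\,\RC(\nabla T_u\cdot v)\rangle_{L^2(S^1\times\RR)}$. Transferring one copy of $\RC$ to its adjoint through $\langle\RC\phi,\psi\rangle=\langle\phi,\RC^\#\psi\rangle$ returns the pairing to $L^2(\Omega)$ and yields $\langle \RC^\#[\RC(T_u-R)]\,\nabla T_u,\,v\rangle_{L^2(\Omega)}$, which is exactly $\langle f_u,v\rangle_{L^2(\Omega)}$ with $f_u$ as in \eqref{eq:fu1}. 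For $\DC_{R^\#RSSD}$ the identical computation applies with $\RC$ replaced by the self-adjoint operator $A:=\RC^\#\RC$; self-adjointness of $A$ produces the composite $A^2=\RC^\#\RC\RC^\#\RC$ acting on $T_u-R$, reproducing \eqref{eq:fu2}.

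Finally, assembling the two derivatives, the optimality condition reads $a(u,v)+\hat\alpha\langle f_u,v\rangle_{L^2(\Omega)}=0$ for all $v\in H$, i.e. $a(u,v)=\hat\alpha F_u(v)$ with $F_z(v):=-\langle f_z,v\rangle_{L^2(\Omega)}$. Freezing the argument of the nonlinearity at a generic $z\in H$ defines $F_z$, and hence the map $\mathcal T$ of Theorem \ref{th:Barnafi2018}; any fixed point of $\mathcal T$ satisfies the full Euler--Lagrange equation and is therefore a critical point of the original functional, which establishes the claim. I expect the main obstacle to be the rigorous justification of the Gâteaux differentiation, namely the interchange of differentiation and integration and the chain rule for $T_u$, which must be controlled using the Lipschitz and boundedness estimates from the preceding lemma; by contrast, the adjoint manipulations are routine once $\RC^\#$ is known to be the continuous adjoint of $\RC$ on the stated $L^2$ spaces.
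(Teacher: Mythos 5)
Your proposal is correct and follows essentially the same route as the paper: compute the G\^ateaux derivatives of $\SC$ and of the similarity measures via the chain rule $\frac{d}{d\tau}T_{u+\tau v}\big|_{\tau=0}=\nabla T_u\cdot v$, use the adjoint identity $\langle\RC\phi,\psi\rangle=\langle\phi,\RC^\#\psi\rangle$ to return the pairing to $L^2(\Omega)$, and read off the Euler equation $a(u,v)=\hat\alpha F_u(v)$ with $f_u$ as in \eqref{eq:fu1}--\eqref{eq:fu2}. The only cosmetic difference is ordering: the paper first rewrites $\DC_{RSSD}$ in the image domain as $\frac{1}{2}\int_\Omega\RC^\#[\RC(T_u-R)]\,(T_u-R)\D{x}$ and then differentiates, whereas you differentiate in the Radon space first and transfer the adjoint afterwards, which yields the identical result.
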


\begin{proof}
First notice that the regularizer $\SC(u)$ can be expressed as: 
\[\SC(u)=\frac{1}{2}a(u,u),\] 
where $a$ is defined in \eqref{eq:regularizator}.

Second, 
\begin{align}
\mathcal{D}_{RSSD}(R,T;u)&
=\dfrac{1}{2}\int_{\Omega}  \mathcal{R}^\#[\mathcal{R}(T_u-R)] (T_u-R)\D{x},\label{eq:RadD} 
\end{align}
whose Gateaux derivative in a direction $v\in H^1(\Omega)$ is given by:
\begin{equation}
D\mathcal{D}(R,T;u)[v]:=\dfrac{d}{d\tau}\mathcal{D}(R,T;u+\tau v)\Big|_{\tau=0}=\int_{\Omega} \RC^\#[\RC( T_u-R )](x) \nabla T_u(x)\cdot v(x) \D{x}. \label{eq:GatDerD}
\end{equation}
The corresponding Euler equation of the Problem \eqref{eq:PropsedDIRProblemAdapted} is
\begin{equation}\label{eq:WeakDIRProblem}
a(u,v)-\hat\alpha F_u(v)=0,\quad \forall v\in H^1(\Omega)
\end{equation}
where
\begin{equation}
F_u(v)=-\int_{\Omega} \RC^\#[\RC( T_u-R )](x) \nabla T_u(x)\cdot v(x) \D{x}
\end{equation}
and this is exactly \eqref{eq:F}, with $u=z$ in \eqref{eq:fu1}.

The analysis for Problem \eqref{eq:PropsedDIRProblemAdapted2} is analogous, noting that, due to the linearity of $\RC$ and $\RC^\#$, the similarity measure \eqref{eq:DFRG}  is equivalent to:
\begin{align}
	\mathcal{D}_{R^\#RSSD}(R,T;u)&
	=\dfrac{1}{2}\int_{\Omega}  \mathcal{R}^\#(\mathcal{R} (\mathcal{R}^\#[\mathcal{R}(T_u-R)])) (T_u-R)\D{x}.\label{eq:RadD2} 
\end{align}
\end{proof}

The following lemma proves the existence and uniqueness of the fixed points of the Problems \eqref{eq:PropsedDIRProblemAdapted} and \eqref{eq:PropsedDIRProblemAdapted2}, for sufficiently large regularization parameters, $\alpha_{RSSD}$ and $\alpha_{R^\#RSSD}$, respectively.
It is shown that the RSSD and $R^\#RSSD$ similarity measures satisfy the necessary hypotheses of Theorem \eqref{th:Barnafi2018}.

\begin{lemma}
The operators $\mathcal T$ related to the Problems \eqref{eq:PropsedDIRProblemAdapted}  and \eqref{eq:PropsedDIRProblemAdapted2}, with $\Omega\subseteq \RR^2$, have at least one fixed point $u\in H,$  if $$\hat\alpha<\dfrac{K}{C_B},$$
where $K$ is  the Korn's related constant, and $C_B$ is the same constant of \eqref{eq:C_B}.

This fixed point $u$  will be unique if $u\in H$ and
$$\hat\alpha<\frac{1}{CL_F}.$$
\label{lm:lemma}
\end{lemma}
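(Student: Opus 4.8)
The plan is to recognize that, thanks to the preceding Lemma, both Problems \eqref{eq:PropsedDIRProblemAdapted} and \eqref{eq:PropsedDIRProblemAdapted2} are already written as the abstract fixed-point equation \eqref{eq:WeakDIRProblemNoUnique}, with $F_z$ given by \eqref{eq:F} together with the nonlinear fields \eqref{eq:fu1} and \eqref{eq:fu2}, respectively. Consequently, the whole statement reduces to verifying hypotheses $\mathbf{H1}$ and $\mathbf{H2}$ of Theorem \ref{th:Barnafi2018} for these specific $F_z$, and then to reading off the two stated thresholds on $\hat\alpha$ from the coercivity and contraction estimates of that theorem. I would treat the $RSSD$ and $R^\#RSSD$ cases simultaneously, since the field \eqref{eq:fu2} differs from \eqref{eq:fu1} only by one extra application of the continuous linear operator $\RC^\#\RC$, so every bound obtained for the former carries over to the latter at the cost of an additional factor $\|\RC^\#\RC\|$.

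First I would establish $\mathbf{H2}$. Writing $F_z(v)=-\langle f_z,v\rangle_{L^2(\Omega)}$ and using Cauchy--Schwarz together with the continuous embedding $\|v\|_{L^2(\Omega)}\le\|v\|_H$, it suffices to bound $\|f_z\|_{L^2(\Omega)}$ uniformly in $z$. Since $\nabla T_z=\nabla T(\,\cdot+z\,)$ is controlled by $\|\nabla T\|_{\infty}$ and $\RC,\RC^\#$ are continuous, I would estimate
\begin{equation}
\|f_z\|_{L^2(\Omega)}\le \|\nabla T\|_{\infty}\,\|\RC^\#\|\,\|\RC\|\,\big(\|T\|_{\infty}+\|R\|_{\infty}\big)\,|\Omega|^{1/2}=:C_B,
\label{eq:C_B}
\end{equation}
which is finite because $T$, $\nabla T$ are bounded and $R\in C^2(\bar\Omega)$. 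This yields $\|F_z\|_{H'}\le C_B$ for every $z\in H$, i.e. $\mathbf{H2}$ with $M_F=C_B$.

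Next I would verify $\mathbf{H1}$. Here the key input is the preceding Lemma, which guarantees that $u\mapsto (T_u-R)$ and $u\mapsto\nabla T_u$ are Lipschitz and bounded as maps into $L^2(\Omega)$. Since a product of bounded Lipschitz factors is again Lipschitz, and since $\RC,\RC^\#$ are bounded linear operators, the field $f_u=\RC^\#[\RC(T_u-R)]\,\nabla T_u$ depends Lipschitz-continuously on $u$; estimating $\|F_u-F_v\|_{H'}\le\|f_u-f_v\|_{L^2(\Omega)}$ by splitting $f_u-f_v$ into the contribution of the Radon factor and that of $\nabla T_u$ then produces a constant $L_F$ with $\|F_u-F_v\|_{H'}\le L_F\|u-v\|_H$. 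I expect this to be the main obstacle: one must handle the composition $T_u=T\circ(id+u)$ carefully and control the product of two $u$-dependent factors, keeping each factor bounded while the other varies.

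With $\mathbf{H1}$ and $\mathbf{H2}$ in hand, the conclusion follows from Theorem \ref{th:Barnafi2018}. For existence, coercivity of the bilinear form $a$ on $H$ --- which is exactly where Korn's inequality enters, with constant $K$ --- gives, from $a(u,u)=\hat\alpha F_z(u)$ and $\mathbf{H2}$, the a priori bound $\|\mathcal T(z)\|_H\le \hat\alpha C_B/K$. The hypothesis $\hat\alpha<K/C_B$ then forces $\|\mathcal T(z)\|_H<1$, so that $\mathcal T$ maps the closed unit ball of $H$ into itself; together with the compactness and continuity already packaged in Theorem \ref{th:Barnafi2018}, Schauder's theorem yields a fixed point $u\in H$. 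For uniqueness I would simply invoke the contraction clause of the same theorem: the condition $\hat\alpha\,C L_F<1$, rewritten as $\hat\alpha<1/(C L_F)$ with $C$ the a priori constant of \cite[Theorem 2]{barnafi2018primal}, makes $\mathcal T$ a contraction on $H$, so the fixed point is unique.
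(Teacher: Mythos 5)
Your proposal follows essentially the same route as the paper: reduce both problems to the fixed-point formulation \eqref{eq:WeakDIRProblemNoUnique}, verify hypotheses $\mathbf{H1}$ and $\mathbf{H2}$ of Theorem \ref{th:Barnafi2018} for the specific $f_z$ in \eqref{eq:fu1} and \eqref{eq:fu2}, and then invoke that theorem for existence and for uniqueness under $\hat\alpha\, C L_F<1$. Your $\mathbf{H2}$ bound (uniform $L^2$ control of $f_z$ via boundedness of $T,\nabla T$ and continuity of $\RC,\RC^\#$) and your $\mathbf{H1}$ splitting of $f_{u_1}-f_{u_2}$ into the Radon-factor difference times $\nabla T_{u_1}$ plus the retained Radon factor times $\nabla T_{u_1}-\nabla T_{u_2}$ are exactly the paper's computations (the paper carries the $\mathbf{H1}$ estimate out pointwise to get $L_F=K_1\,Lip(T)+K_2\,Lip(\nabla T)$, where you only sketch it). The one place where you genuinely depart from the paper is the treatment of the existence threshold. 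In the paper's proof this condition is never used: existence follows unconditionally from $\mathbf{H1}$ and $\mathbf{H2}$ via Theorem \ref{th:Barnafi2018}, and the threshold $\hat\alpha<K/C_B$ is carried in the statement only because of the subsequent theorem, where $C_B$ is the \emph{Hessian} boundedness constant defined in \eqref{eq:C_B} (namely $C_B=K_B\left\|\RC^\#\right\|\left\|\RC\right\|$, bounding the term $\int_\Omega \RC^\#[\RC(R-T_u)]\,D^2T_u[v,v]\,\mathrm{d}x$), needed to show the fixed point is a local minimizer. Your $C_B$ is instead the $\mathbf{H2}$ constant $M_F$, so your unit-ball self-map argument $\|\mathcal T(z)\|_H\le\hat\alpha C_B/K<1$ plus Schauder, while internally consistent as an a priori estimate, attaches the wrong meaning to $C_B$ and is logically redundant: since you must use Theorem \ref{th:Barnafi2018} as a black box anyway (you do not prove compactness or continuity of $\mathcal T$ yourself), existence already holds with no restriction on $\hat\alpha$. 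This does not invalidate your proof --- establishing the conclusion under weaker hypotheses still proves the lemma --- but it is the point where your reconstruction and the paper's logic differ.
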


\begin{proof}

Since  $T_u-R$ and $\nabla T_u$ are Lipschitz functions, the function  $f_u$ defined in \eqref{eq:fu1} is a Lipschitz function  with respect to $u$, since, given $u_1,u_2\in H,$ it has
\begin{align*}
|f_{u_1}(x)&-f_{u_2}(x)|
= 
\left| \, 
\mathcal{R}^\#[\mathcal{R}( T_{u_1}-R )] (x)\nabla T_{u_1} (x)-
\mathcal{R}^\#[\mathcal{R}( T_{u_2}-R )] (x)\nabla T_{u_2}(x)
\right|\nonumber\\
&= 
\left|
\left(\mathcal{R}^\#[\mathcal{R}( T_{u_1}\,-R )](x) - 
\mathcal{R}^\#[\mathcal{R}( T_{u_2}-R)](x)
\right)\nabla T_{u_1} (x)\right.\\
&\left.\qquad\qquad\qquad\qquad\quad\qquad\qquad+\mathcal{R}^\#[\mathcal{R}( T_{u_2}-R)](x) \left(\nabla T_{u_1}(x)- \nabla T_{u_2}(x)\right)
\right|\nonumber\\
&\leq \left| \, \mathcal{R}^\#[\mathcal{R}( T_{u_2} -T_{u_1})](x)\right|  \left| \nabla T_{u_1}(x)\right| 
\\
&\qquad\qquad\qquad\qquad\quad\qquad\qquad+ \left|\mathcal{R}^\#[\mathcal{R}( T_{u_2}-R)](x)   \right|  \left|  \nabla T_{u_1}(x)  - \nabla T_{u_2}(x) 
\right|\nonumber\\
&\leq \left\|\mathcal{R}^\#\right\|  \left|\mathcal{R}\right\| |T_{u_2}(x)-T_{u_1}(x)|\\
&\qquad\qquad\qquad\qquad\quad+  \left\|\mathcal{R}^\#\right\|  \left\|\mathcal{R}\right\|\sup |T_{u_2}(x)-R(x)|\left|  \nabla T_{u_1}(x)  - \nabla T_{u_2}(x) 
\right|\\
&\leq [ \,K_1 Lip(T) +K_2 Lip(\nabla T)\, ] \, |u_1(x) -u_2(x) |,\\
&= L_F|u_1(x) -u_2(x) |, ~\forall x\in\Omega~a.e. ,\nonumber
\end{align*}
where $K_1,K_2$ are positive constants, such that 
$ \|\mathcal{R}^\#\| \, \|\mathcal{R} \| \,\| \nabla T_{u_1} \|\leq K_1$,
\linebreak
 $ \|
\mathcal{R}^\#\| \, \|\mathcal{R} \| \,\sup|T_{u_2}(x)-R(x)| 
\leq K_2,$ and $L_F =  K_1 Lip(T) +K_2 Lip(\nabla T)$, where $Lip(T)$ and $Lip(\nabla T)$ are constants given in \eqref{eq:LipT} and \eqref{eq:LipGradT}.
It follows that,
\begin{equation} 
	\|f_{u_1}-f_{u_2}\|_{L^2{(\Omega)}} \leq L_F\|u_1 -u_2 \|_{L^2{(\Omega)}} .
	\label{eq:fuL2}
\end{equation}

Using \eqref{eq:fuL2} and Hölder's inequality, it holds that:
 \begin{align*} 
 	\|F_{u_1}-F_{u_2}\|_{H^\prime} 
 	& =\sup_{ v\in H, ~\|v\|_H\leq 1} \left|\langle f_{u_1}- f_{u_2},v \rangle_{L^2(\Omega)}\right| \label{eq:FFLip}
 	\\  &\nonumber
 	\leq \sup_{ v\in H, ~\|v\|_H\leq 1} \| f_{u_1}- f_{u_2}\|_{L^2(\Omega)} \|v \|_{L^2(\Omega)} \\ \nonumber&
 	 	{\leq \| f_{u_1}- f_{u_2}\|_{L^2(\Omega)} \sup_{ v\in H, ~\|v\|_H\leq 1}  \|v \|_{H} }\\ &\nonumber
 	 	\leq  L_F\|u_1 -u_2 \|_{L^2{(\Omega})},
 \end{align*}
verifying the assumption \textbf{H1} of Theorem \eqref{th:Barnafi2018}.

From the linearity and continuity of $\mathcal{R}$ and $\mathcal{R}^\#$, it follows that $\mathcal{R}^\#(\mathcal{R}(T_u-R))$ belongs to $L^2(\Omega)$. Hence, since to $\nabla T_u$ and $T_u$ are assumed to be bounded (recall that $u$ can be replaced by $u_M$ bounded), one finds that $f_u\in L^2(\Omega)$. Then,
\begin{equation}
\left|F_u(v) \right|=\left|\langle f_{u},v \rangle_{L^2(\Omega)} \right|\leq \|f_u\|_{L^2(\Omega)} \|v\|_{L^2(\Omega)}\leq C_u \|v\|_{L^2(\Omega)}\leq C_u \|v\|_{H^1(\Omega)},\label{eq:FuLC}
\end{equation}
where $C_u$ is a continuity constant. It follows that $F_u\in (H^1(\Omega))^\prime\subseteq H^\prime,$ verifying the assumption \textbf{H2} of Theorem \eqref{th:Barnafi2018}.

Therefore, thanks to the Theorem \ref{th:Barnafi2018}, one obtains that $\mathcal{T}$ has at least one fixed point, and these fixed points are unique if $\hat\alpha C L_F<1$.   
\end{proof}

Since this existence of fixed points implies the existence of a solution for the weak Problem \eqref{eq:WeakDIRProblem}, in the following theorems conditions for the existence and uniqueness of solutions for the Problems \eqref{eq:PropsedDIRProblemAdapted}  and \eqref{eq:PropsedDIRProblemAdapted2} are established.

\begin{theorem}
	The Problems \eqref{eq:PropsedDIRProblemAdapted}  and \eqref{eq:PropsedDIRProblemAdapted2} with $\Omega\subseteq \RR^2$,  
	 have at least one solution $u\in H,$  if 
		$$\hat\alpha<\dfrac{K}{C_B},$$
where $K$ is  the Korn's related constant, and $C_B$ is the same constant of \eqref{eq:C_B}.

This solution $u$  will be unique if $u\in H$ and
	$$\hat\alpha<\min\left\{\dfrac{ K}{C_B},\frac{1}{CL_F}\right\}.$$
\end{theorem}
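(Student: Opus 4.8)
The plan is to build the theorem directly on top of the fixed-point results already obtained, since the analytic difficulties have all been absorbed into the preceding lemmas. The guiding idea is that Problems \eqref{eq:PropsedDIRProblemAdapted} and \eqref{eq:PropsedDIRProblemAdapted2} have been reformulated, through their Euler-Lagrange equations, as the weak problem \eqref{eq:WeakDIRProblemNoUnique}, whose solutions are exactly the fixed points of the operator $\mathcal{T}$. The strategy therefore breaks into three steps: first, produce a fixed point of $\mathcal{T}$ under the stated smallness condition on $\hat\alpha$; second, identify that fixed point with a stationary point of the functional $\hat\JC$; and third, upgrade the stationary point to a local minimizer, so that it genuinely solves the minimization problem.

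For the existence claim I would invoke Lemma \ref{lm:lemma} directly: whenever $\hat\alpha < K/C_B$ the operator $\mathcal{T}$ possesses at least one fixed point $u\in H$. By the lemma that expresses solutions as fixed points, such a $u$ satisfies $a(u,v)=\hat\alpha F_u(v)$ for all $v\in H$, which is precisely the vanishing of the Gateaux derivative \eqref{eq:GatDerD} of $\hat\JC$; hence $u$ is a critical point. To promote this to a local minimizer I would compute the second Gateaux variation of $\hat\JC$ at $u$. The regularizer contributes $a(v,v)=\int_\Omega \mathbb{C}\,\varepsilon(v):\varepsilon(v)\D{x}$, which by Korn's inequality is coercive on $H$---recall that rigid motions were removed in Definition \ref{def:H}---so that $a(v,v)\geq K\|v\|_H^2$. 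The similarity term is non-convex in $u$, because $T_u$ depends nonlinearly on $u$, yet its second variation is bounded in modulus by a constant multiple of $\|v\|_H^2$ governed by $\|\RC\|$, $\|\RC^\#\|$, and the same Lipschitz and boundedness data $Lip(T)$, $Lip(\nabla T)$ already used in Lemma \ref{lm:lemma}. For $\hat\alpha$ below the threshold, the coercive quadratic form dominates this indefinite contribution, giving $D^2\hat\JC(u)[v,v]>0$ for every $v\neq 0$ and hence a strict local minimizer.

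For uniqueness I would again appeal to Lemma \ref{lm:lemma}: under the sharper bound $\hat\alpha < 1/(CL_F)$ the operator $\mathcal{T}$ becomes a contraction, so its fixed point is unique; intersecting this with the existence threshold yields $\hat\alpha < \min\{K/C_B,\,1/(CL_F)\}$, under which the critical point, and therefore the local minimizer, is unique. The step I expect to be the main obstacle is the second-variation domination: one must check that the constant controlling the non-convex similarity Hessian is compatible with the constant $C_B$ of \eqref{eq:C_B}, so that the same threshold $K/C_B$ that secures the Schauder-type existence also forces positive-definiteness of the Hessian. If the Hessian bound should demand a stricter constant, the clean form of the existence condition would have to be adjusted, or the notion of ``solution'' weakened to that of the weak Euler-Lagrange problem \eqref{eq:WeakDIRProblemNoUnique}.
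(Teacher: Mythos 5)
You take essentially the same route as the paper: existence of a fixed point of $\mathcal{T}$ via Lemma \ref{lm:lemma}, identification of that fixed point with a critical point of the functional, positivity of the second variation to upgrade it to a local minimizer, and the contraction condition $\hat\alpha<1/(CL_F)$ for uniqueness. The one step you flag as the main obstacle is exactly where your sketch and the paper differ, and your version as written would not deliver the stated constant. You bound the \emph{whole} second variation of the similarity term in modulus by a multiple of $\|v\|_H^2$; the paper instead splits it as
\[
\hess \DC(R,T;u)[v,v]=\int_{\Omega}\mathcal{R}^\#[\mathcal{R}(\nabla T_u\cdot v)]\,(\nabla T_u\cdot v)\D x-\int_{\Omega}\mathcal{R}^\#[\mathcal{R}(R-T_u)]\,D^2T_u[v,v]\D x ,
\]
notes that the first term equals $\int_{S^1\times\RR}[\mathcal{R}(\nabla T_u\cdot v)]^2\D\sigma\D s\geq 0$ by the adjoint identity (so it can simply be discarded when bounding from below), and estimates only the second term; that estimate is precisely what defines $C_B$ in \eqref{eq:C_B}. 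If you absorb both terms into a single modulus bound, the resulting constant is in general strictly larger than $C_B$, so your threshold for Hessian positivity would be stricter than the $K/C_B$ appearing in the statement --- the contingency you yourself anticipate in your final sentence. With that single refinement (keep the non-negative term with its sign and bound only the $D^2T_u$ term), your argument coincides with the paper's proof, including the final chain $\hess\hat\JCR(u^*)[v,v]\geq (K-\hat\alpha C_B)\|v\|_H^2>0$ and the uniqueness conclusion under $\hat\alpha<\min\left\{K/C_B,\,1/(CL_F)\right\}$.
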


\begin{proof}

The existence of solutions for the weak Problem \eqref{eq:PropsedDIRProblemAdapted}   is deduced due to the existence of fixed points of the operator $\mathcal T$, given by Lemma \ref{lm:lemma}. Furthermore, it is now necessary to verify that the solutions found for the Problem \eqref{eq:WeakDIRProblem} are in fact minimizers for the proposed DIR Problem \eqref{eq:PropsedDIRProblemAdapted}. 
In order to demonstrate this, it will be shown that $$\hess \hat\JCR (u^*)[v,v]> 0, \qquad \forall v\in H\setminus \{0\},$$ where $u^*$ is the fixed point of the operator $\mathcal T$ found earlier, i.e., it is a solution of the weak Problem \eqref{eq:WeakDIRProblem}, with $z=u^*$.

It is known that:
\[\hess \hat\JCR (u)[v,v]=\hess \SC (u)[v,v]+\hat\alpha\hess \DC(R,T;u)[v,v].\]
On the one hand, note that the derivatives of $a$ are given by:
\[
\jac a(u,u)[v]=2\int_\Omega \mathbb{C}\varepsilon(u):\varepsilon(v)\D x ~~\mbox{and}~~
\hess a(u,u)[v,w]=2\int_\Omega \mathbb{C}\varepsilon(w):\varepsilon(v)\D x.
\]
Additionally, from the Poincaré inequality and from 
$\trace(\varepsilon^2(v))=\frac{1}{2}|\nabla v|^2,$
 $D^2a$ is coercive because:
\begin{align*}
\hess a(u,u)[v,v]&=2\int_\Omega \mathbb{C}\varepsilon(v):\varepsilon(v)\D x
= 2 a(v,v) \geq 2K\|v\|^2_{H}, \qquad 
\end{align*}
where $K$ is a constant from the Korn's inequality 3.4, in \cite{duvant2012inequalities}.

On the other hand, the first Gateaux derivative of $D$ is given in \eqref{eq:GatDerD}, and its second derivative is given by:
\begin{align*}
D^2\mathcal{D}(R,T;u)[v,&w]:=\dfrac{d}{d\mu}\dfrac{d}{d\tau}\mathcal{D}[R,T;u+\tau v+\mu w]\Big|_{\tau=\mu= 0}\nonumber\\
&=\!
\int_{\Omega} \;\mathcal{R}^\#[\mathcal{R}(\nabla T_u\cdot v )] (\nabla T_u\cdot  w)\D x
- \!\int_{\Omega} \!\mathcal{R}^\#[\mathcal{R}( R-T_u )]  D^2T_u[v,w].
\end{align*}
Note that, the first term of $D^2\DC(u)[v,v]$ is non-negative, because, for all $ v\in\VC$:
\[\int_{\Omega} \mathcal{R}^\#[\mathcal{R}(\nabla T_u\cdot v )](x) (\nabla T_u\cdot  v)(x)\D x = \int_{S^1\times\RR} [\mathcal{R}(\nabla T_u\cdot v )(\sigma,s)]^2 \D\sigma \D s\geq 0.
\]
Since the Radon transform is continuous, $R,T\in C_0^2(\Omega),$ and $T$ Lipschitz, the second term of $\hess\hat\JCR(u)[v,v]$ is bounded.  
Let $u^*$ be the fixed point of the operator $\mathcal T$  previously found. For all    $v\in H\setminus \{0\}$, this follows:
\begin{align}
\biggl|\int_{\Omega} \mathcal{R}^\#[\mathcal{R}( R-T_{u}  )]  D^2T_u[v,v]\D x\biggl|
&\leq \int_{\Omega} \left|\mathcal{R}^\#[\mathcal{R}( R-T_{u} )]  
D^2T_u[v,v]\right|\D x\nonumber\\
&\leq  \left\| \mathcal{R}^\#\mathcal{R}( R-T_{u}) \right\|_{L^2(\Omega)}   
\left\| D^2T_u[v,v]\right\|_{L^2(\Omega)}\nonumber\\
&\leq  \left\| \mathcal{R}^\#\mathcal{R}( R-T_u) \right\|_{L^2(\Omega)}   
\left\| D^2T_u[v,v]\right\|_{L^2(\Omega)}\nonumber\\
&\leq  \left\|\mathcal{R}^\#\right\|  \left\|\mathcal{R}\right\|  
\left\|  R-T_{u}\right\|_{L^2(\Omega)}   
\left\| D^2T_u[v,v]\right\|_{L^2(\Omega)}\nonumber\\
&  \leq K_B \, \left\|\mathcal{R}^\#\right\|  \left\|\mathcal{R}\right\|   
\|v\|_{L^2(\Omega)}^2\nonumber\\
&\leq   C_B\|v\|_{H}^2 ,  \label{eq:C_B}
\end{align}
where $K_B$ is a  boundedness  constant, since $R-T_{u}$ and $D^2T_u$ are bounded (recall again that $u$ can be assumed to be bounded) and $C_B= K_B\left\|\mathcal{R}^\#\right\|  \left\|\mathcal{R}\right\|   $.

Since $D^2a$ is coercive, the first term of $D^2\DC$ is positive, and the second term of $D^2\DC$ is bounded, it follows that:
\begin{align}
\hess \hat\JCR (u^*)[v,v]&=\hess \SC (u^*)[v,v]+\hat\alpha\hess \DC(R,T;u^*)[v,v]\nonumber\\
&=\dfrac{1}{2}\hess  a({u^*},{u^*})[v,v]+\hat\alpha\hess \DC(R,T;u^*)[v,v]\nonumber\\
&\geq K\|v\|^2_{H}+\hat\alpha
\int_{\Omega} \mathcal{R}^\#[\mathcal{R}(\nabla T_{u^*}\cdot v )] (\nabla T_{u^*}\cdot  v)\D x\nonumber\\
&\qquad \qquad
-\hat\alpha\int_{\Omega} \mathcal{R}^\#[\mathcal{R}( {T_{u^*}-R })]  D^2T_{u^*}[v,v]\D x
\nonumber\\
&\geq  K\|v\|^2_{H}+\underbrace{\hat\alpha
\int_{\Omega} [\mathcal{R}(\,\nabla T_{u^*}\cdot v \,)] ^2\D x }_{\geq 0} -\hat\alpha C_B\|v\|^2_{H}
\nonumber\\
&\geq   (K-\hat\alpha C_B)\|v\|^2_{H}. \nonumber
\end{align}
And, by imposing that $\hat\alpha<\dfrac{ K}{C_B},$ 
\[
\hess \hat\JCR (u^*)[v,v]>  0.
\]
From the continuity of the functional $\hat\JCR$ and the fact that 
$\hess \hat\JCR (u^*)[v,v]>0,$ one finds that $u^*$ is also a local minimum of $\hat\JCR$.
And, in accordance with the Lemma \ref{lm:lemma}, this solution $u^*$  will be unique if 
	$$\hat\alpha<\min\left\{\dfrac{ K}{C_B},\frac{1}{CL_F}\right\}.$$

The proof of the previous result for the Problem \eqref{eq:PropsedDIRProblemAdapted2} is also analogous.
\end{proof}

In addition, to prove that $u^*$ is the minimum of the functional $\hat\JCR$, an alternative demonstration using a Taylor expansion is included.
The idea is to show that $\hat\JCR$ is a convex functional around $u^*$, i.e., for all directions $v \in H$ it holds that
$\hat\JCR[u^*+v]>\hat\JCR[u^*].$

\begin{theorem}
	The Problems \eqref{eq:PropsedDIRProblemAdapted}  and \eqref{eq:PropsedDIRProblemAdapted2} with $\Omega\subseteq \RR^2$, have at least a solution $u\in H,$  if $$\hat\alpha<\frac{C_{coe}}{2C_{BC}},$$
	where $C_{coe}$ is a coercitivity constant and  $C_{BC}$ is a continuity constant.
	
	This solution $u$ will be unique if  $u\in H$ and
	$$\hat\alpha<\min\left\{\frac{C_{coe}}{2C_{BC}},\frac{1}{CL_F}\right\}.$$
\end{theorem}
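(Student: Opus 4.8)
The plan is to reuse the fixed point $u^*$ produced by Lemma~\ref{lm:lemma} and to upgrade it from a mere critical point of the weak problem \eqref{eq:WeakDIRProblem} to a strict minimizer of $\hat\JCR$ by means of a second-order Taylor expansion along rays emanating from $u^*$. Concretely, for a fixed direction $v\in H\setminus\{0\}$ I would introduce the scalar function $\phi(t):=\hat\JCR(u^*+t v)$ for $t\in[0,1]$, which is $C^2$ because the $C^2$-regularity and the Lipschitz/boundedness assumptions on $T$ and $\nabla T$ guarantee that all the Gateaux derivatives computed in the preceding theorem exist and are continuous. By the classical Taylor theorem with Lagrange remainder there is $\theta\in(0,1)$ with $\phi(1)=\phi(0)+\phi'(0)+\tfrac12\phi''(\theta)$, that is,
\[
\hat\JCR(u^*+v)-\hat\JCR(u^*)=\jac\hat\JCR(u^*)[v]+\tfrac12\hess\hat\JCR(u^*+\theta v)[v,v].
\]
Since $u^*$ solves \eqref{eq:WeakDIRProblem}, the first-order term $\jac\hat\JCR(u^*)[v]$ vanishes, so the whole problem reduces to showing that the Hessian evaluated at the intermediate point is strictly positive.

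For the Hessian I would split $\hess\hat\JCR=\hess\SC+\hat\alpha\hess\DC$ exactly as before. The regularizer contributes a coercive term: because $\SC(u)=\tfrac12 a(u,u)$ and $\hess a$ is coercive by Korn's inequality, one has $\hess\SC(u)[v,v]\geq \tfrac12 C_{coe}\|v\|_H^2$, where $C_{coe}$ is the coercivity constant of $\hess a$; this is precisely the origin of the factor $2$ appearing in the statement, since $\hess\SC=\tfrac12\hess a$. For the similarity term I would recycle the estimates of the preceding proof: its first component $\int_{S^1\times\RR}[\RC(\nabla T_u\cdot v)]^2$ is nonnegative, while the second component is bounded in modulus by a continuity constant $C_{BC}\|v\|_H^2$, uniformly in the base point, using the continuity of $\RC$ and $\RC^\#$ together with the boundedness of $R-T_u$ and $\hess T_u$. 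Assembling these gives, at every admissible base point,
\[
\hess\hat\JCR(u^*+\theta v)[v,v]\geq\Big(\tfrac12 C_{coe}-\hat\alpha C_{BC}\Big)\|v\|_H^2 .
\]

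Plugging this into the Taylor identity, the hypothesis $\hat\alpha<\tfrac{C_{coe}}{2C_{BC}}$ makes the right-hand side strictly positive, whence $\hat\JCR(u^*+v)>\hat\JCR(u^*)$ for all $v\neq 0$; thus $u^*$ is a strict minimizer and, in particular, a solution of \eqref{eq:PropsedDIRProblemAdapted}, which yields existence. Uniqueness then follows by intersecting this admissible range with the contraction threshold $\hat\alpha<\tfrac{1}{CL_F}$ of Lemma~\ref{lm:lemma}, producing the stated $\min$, and the argument for \eqref{eq:PropsedDIRProblemAdapted2} is identical once the similarity measure is rewritten as in \eqref{eq:RadD2}. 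The main obstacle I anticipate is bookkeeping rather than conceptual: one must verify that the lower bound on $\hess\DC$ holds \emph{uniformly} along the whole segment $u^*+\theta v$, so that the remainder can be controlled for arbitrary (not merely infinitesimal) $v$; this in turn hinges on the uniform boundedness of $T_u$, $\nabla T_u$ and $\hess T_u$ secured by replacing $u$ with its truncation $u_M$.
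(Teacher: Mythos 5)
Your proposal is correct and follows essentially the same route as the paper's own proof: a second-order Taylor expansion with Lagrange remainder around the fixed point $u^*$ from Lemma~\ref{lm:lemma}, cancellation of the first-order term via the Euler equation $a(u^*,v)-\hat\alpha F_{u^*}(v)=0$, and the comparison of the coercivity constant of $a$ against the continuity bound $C_{BC}$ on $\hess\DC$ to force strict positivity, yielding the same threshold $\hat\alpha<\tfrac{C_{coe}}{2C_{BC}}$ and the same $\min$ for uniqueness. The only cosmetic difference is that you Taylor-expand the full functional $\hat\JCR$ while the paper expands the quadratic regularizer exactly and applies Taylor only to $\DC$; since $\hess\SC$ is constant these are identical, and your explicit attention to the uniformity of the Hessian bound along the segment $u^*+\theta v$ (via the truncation $u_M$) is, if anything, more careful than the paper's treatment.
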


\begin{proof}

Due to the existence of fixed points of the operator $\mathcal T$, given by Lemma \ref{lm:lemma}, the existence of solutions for the weak Problem \eqref{eq:PropsedDIRProblemAdapted} is deduced. Furthermore, it must be verified that the fixed points found for \eqref{eq:WeakDIRProblem} are indeed minimizers for the proposed DIR Problem \eqref{eq:PropsedDIRProblemAdapted}. 

Let $v\in H\setminus\{0\}$ be an arbitrary direction. Given the $u^*$ solution to Problem \eqref{eq:WeakDIRProblem}, it follows that $a(u^*,v)-\hat\alpha F_{u^*}(v)=0.$  Furthermore, from the symmetry and positively of the bilinear form $a,$ and a Taylor expansion of $\DC$ at $u^*$ in the direction $v$, one obtains that:
\begin{align}
	\hat\JCR[u^*+v]&:=\SC[u^*+v]+\hat\alpha \DC[R,T;u^*+v]\nonumber\\ 
	&=\dfrac12 a(u^*,u^*) + a(u^*,v)+\dfrac12 a(v,v) +\hat\alpha \{\DC[R,T;u^*] 
	\nonumber\\
	&\qquad\qquad+ \nabla \DC[R,T;u^*]\cdot v + r_{u^*,2}(v)
	\} \nonumber\\
	&=\left\{\dfrac12 a(u^*,u^*)+\hat\alpha D[R,T;u^*]\right\}  +\big\{ a(u^*,v)
	-(-\hat\alpha \nabla \DC[R,T;u^*]\cdot v )\big\}
	\nonumber\\
	&\qquad\qquad+\dfrac12 a(v,v)  + \hat\alpha  r_{u^*,2}(v)\nonumber\\
	&=\hat\JCR[u^*] +\{ \underbrace{ a(u^*,v)-\hat\alpha F_{u^*}(v) }_{=0} \}+\dfrac12 a(v,v)  + \hat\alpha  r_{u^*,2}(v) \nonumber\\
	&=\hat\JCR[u^*]+\dfrac12 a(v,v)+ \hat\alpha  r_{u^*,2}(v), \nonumber
\end{align}
where 
$ r_{u^*,2}(v):=\dfrac{1}{2}\hess\DC[R,T;u^*+cv][v,v]$ is the remainder, with $c\in(0,1).$
Let $\hat\alpha>0$ and $a(v,v)>0.$ If $ r_{u^*,2}(v)\geq 0,$ then 
$\hat\JCR[u^*+v]> \hat\JCR[u^*].$ If $ r_{u^*,2}(v)< 0,$
 it is necessary that:
\begin{equation}
\hat\alpha <	\dfrac{a(v,v)}{|2r_{u^*,2}(v)|} ,\quad \forall v\in H. 
\end{equation}
Since $a$ is coercive, there exists $C_{coe}$ such that $a(v,v)\geq C_{coe}\|v\|^2.$ And since the  expression $\hess\DC[R,T;u^*+cv][\cdot,\cdot]$ is a continuous bilinear form,  there exist $C_{BC}>0,$ such that: $$ |r_{u^*,2}|\leq\Big|\dfrac{1}{2}\hess\DC[R,T;u^*+cv][v,v]\Big|\leq C_{BC}\|v\|^2.$$
So if we take $\hat\alpha<\frac{C_{coe}}{2C_{BC}},$ it follows:
\begin{equation}
	\left|\dfrac{a(v,v)}{2r_{u^*,2}(v)}\right|\geq \dfrac{C_{coe}\|v\|^2}{C_{BC}\|v\|^2}\geq \dfrac{C_{coe}}{C_{BC}} >\hat\alpha.
	\label{eq:condalpha}
\end{equation}
It can thus be concluded that if the condition  \eqref{eq:condalpha} is satisfied, the fixed point $u^*$ is also a local minimum of $\hat\JCR$. And, in accordance with the Lemma \ref{lm:lemma}, this solution $u^*$ will be unique if 
$$\hat\alpha<\min\left\{\frac{C_{coe}}{2C_{BC}},\frac{1}{CL_F}\right\}.$$

The proof of the previous result for the Problem \eqref{eq:PropsedDIRProblemAdapted2} is also analogous.

\end{proof}

It should be noted that the results previously obtained for the reformulated Problems \eqref{eq:PropsedDIRProblemAdapted}  and \eqref{eq:PropsedDIRProblemAdapted2} are also applicable to the Problems \eqref{eq:ProposedDIRProblem} and \eqref{eq:ProposedDIRProblem2}.

\section{Numerical implementation}
\label{sec:Implementation}

In the following section, a finite element-based method is proposed to compute the numerical solution of the DIR Problems \eqref{eq:ProposedDIRProblem} and \eqref{eq:ProposedDIRProblem2}. 
Let  $\Omega^h\subset\Omega$ be the image domain composed of the elements $\Omega^e$  satisfying that $\Omega^h= \cup^{n_{el}} _{e=1} \Omega^e.$ 
The finite dimensional functional space $\VC^h\subset\VC$ is defined by: 
\[
\VC^h :=\Big\{ u^h\in\VC\,|\, u^h(x)= \dsum_{A=1}^{N_{dof}} N_A(x) u_A\Big\},
\]
where $N_{dof}$ is the number of degrees of freedom, $\{ N_A(x) : A=1,\cdots,N_{dof}\}$ is a set of the basis functions of the space $\VC^h$, also called the shape functions, and $u_A$ are the nodal displacements of the vertices $x_A$ of the elements in the domain $\Omega^h$. 
The basis functions are considered as continuous piecewise linear polynomials over the domain $\Omega^h$, satisfying that the Kronecker delta identity in the nodes, \textit{i.e.}, $N_A(x^B)=\delta_{AB}$.
In this way, the deformation field $u\in\VC$ can be approximated by an element $u^h\in\mathcal{V}^h$ which satisfies: \[u(x) \approx u^h(x):=\dsum_{A=1}^{N_{dof}} N_A(x) u_A,\]
where all vertices have free motion, including the nodes at the edges of the image.

For the calculation of the deformations, two triangular Delaunay meshes were considered, as shown in Figure \ref{fig:meshes}. A \textit{coarse} mesh with 64 elements, 41 nodes, and  82 degrees of freedom, and a \textit{fine} triangular mesh with 545 elements, 1024 nodes and 2048 degrees of freedom. 

\begin{figure}[t!]
	\centering
	\begin{minipage}[t]{\linewidth}
		\centering
		\subfigure[Coarse mesh:  $N_{el}=64$ and $N_{dof}=82$.]{\includegraphics[width=0.45\linewidth]{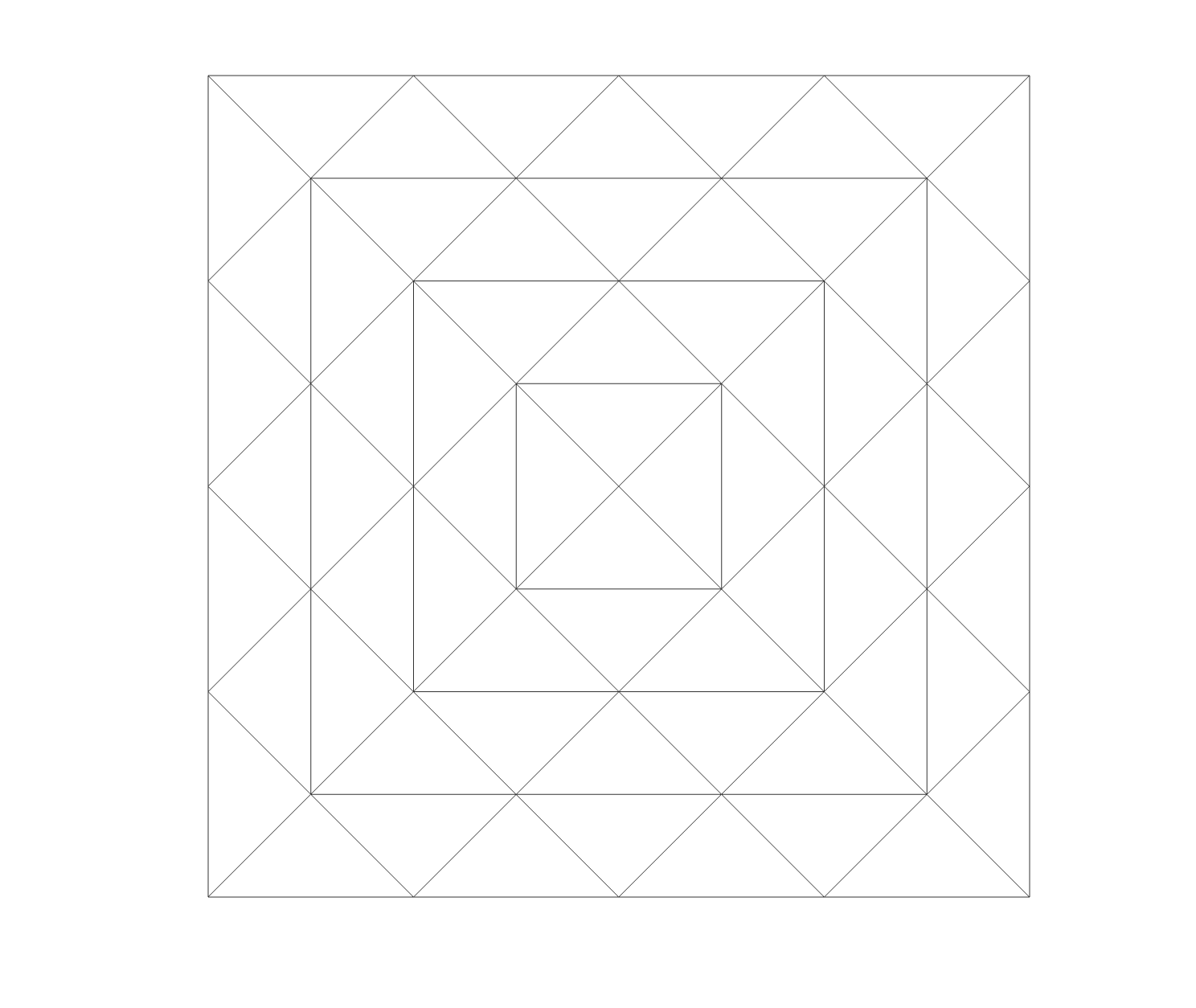} \label{fig:mesh} }\quad
		%\subfigure[Medium mesh. \newline $N_{el}=248$ and $N_{dof}=282$.]{\includegraphics[width=0.3\linewidth]{"figures/mesh8Ref1"} \label{fig:meshref} }
		\subfigure[Fine mesh:   $N_{el}=545$ and $N_{dof}=2048$.]{\includegraphics[width=0.45\linewidth]{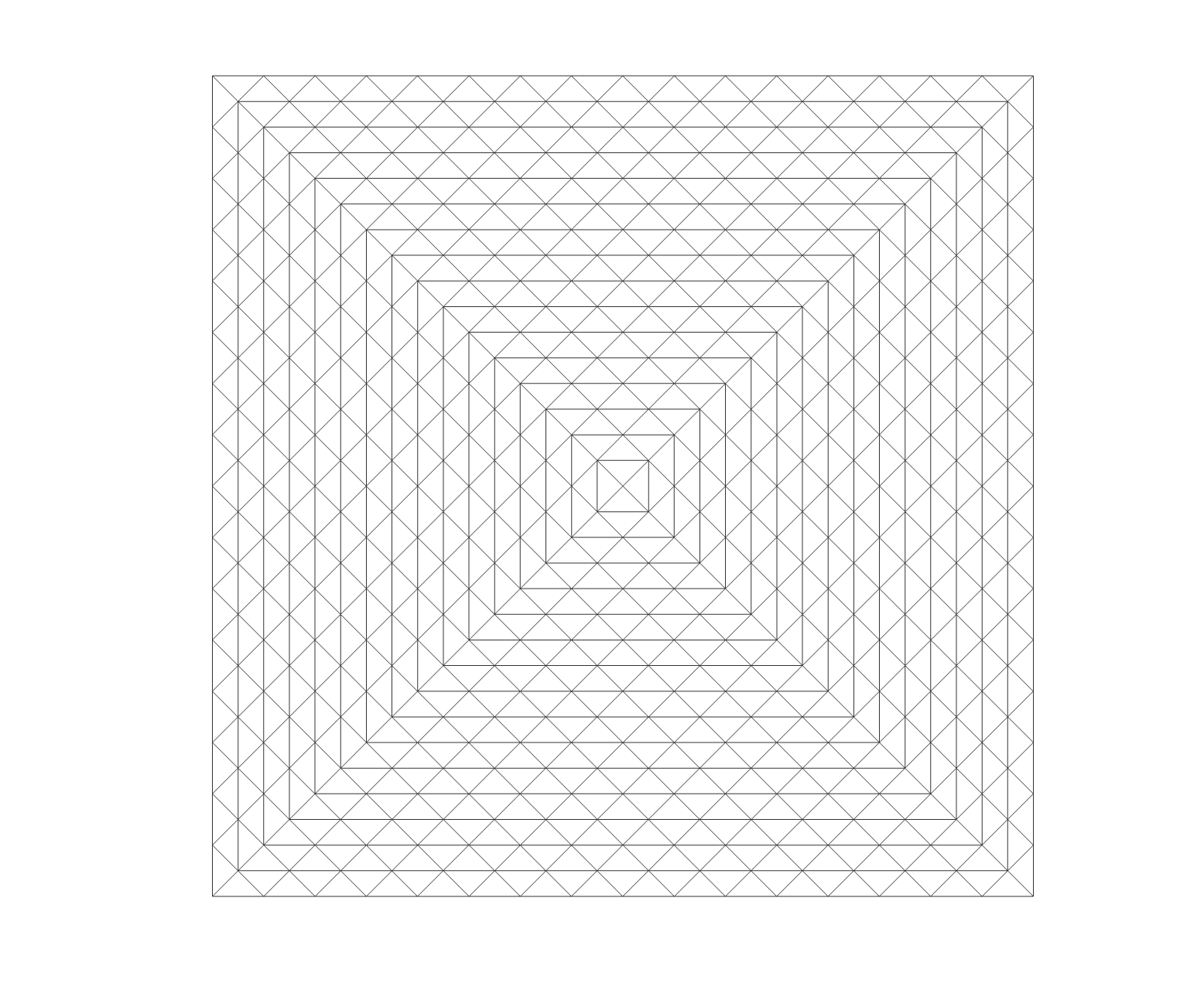} \label{fig:meshref} }
	\end{minipage}
	\caption{Delaunay meshes utilized in this work.  }
	\label{fig:meshes}
\end{figure}

The discretization of the LEE regularizer \eqref{eq:LE} leads to the quadratic form
\begin{equation}
	\SC^h(u^h):= \mathbf{u}^t\,\mathbf{K}\,\mathbf{u},
	\label{eq:sch}
\end{equation}
where  $\textbf{u}=(u_{A_1},\cdots,u_{A_{N_{dof}}}),$ and $\textbf{K}$ is the stiffness matrix, computed by 
$K:=\int_{\Omega^h} B^T_ADB_A \,\mbox{d}\Omega^h,$ where $D$ is the Cauchy tensor, $B$ is an array representing the term $\varepsilon(N_A(x))$ in equation \eqref{eq:LE}, $N_{A,1}$ and $N_{A,2}$ are the derivatives of the FEM basis function, and  \[B^T_A:=\begin{bmatrix}
	N_{A,1}&0&N_{A,2}\\
	0&N_{A,2}&N_{A,1}
\end{bmatrix}.\] 
For details, see \cite[Sections 2.7-2.9]{hughes2012finite}.

The similarity measure $\DC_{SSD}$ is discretized as follows:
 $$\DC^h_{SSD}(R,T;u):=\frac{4}{N^2}\, \|R-T_{u^h}\|_2^2,$$  
where $u^h\in\VC^h$ is the deformation field,  $R$ and $T_{u^h}$ are images of size $N\times N$,  $T_{u^h}$ is the interpolated image which takes the intensities of $T$ in the pixel $(i,j)$, and assigns them to the pixel  $(i,j)+u^h$.

The similarity measure $\DC_{RSSD}$ is discretized as:
\begin{align}
\DC^h_{RSSD}[R,T;u^h] :=\dfrac{1}{2}\cdot\dfrac{2\pi}{N_\omega\,N_s}\sum_{\omega=1}^{N_{\omega}}\sum_{s=1}^{N_{s}} &\big(\,\RC\big(T_{u^h}\big)[\omega,s]-\RC(R)[\omega,s ]\,\big)^2
\label{eq:RSSD_Discrete}
\end{align}
where  $N\times N$ pixels, $N_\omega$ is the number of equidistant nodes in $[0,\pi],$
$N_s$ is the number of equidistant nodes in $[-1,1],$  and where $u^h,R$ and $T_{u^h}$  are the same as before.

From the linearity of the Radon transform,  $D^h_{RSSD}$ is calculated directly from the sinograms of the images $R$ and $T_u^h$, as follows:
\begin{equation}
\label{eq:RSSDFromSinogram}
\DC^h_{RSSD}[R,T;u^h] =\dfrac{1}{2}\cdot \dfrac{2\pi}{N_\omega\,N_s}\, \|\mbox{sinogram}( T_{u^h} )-\mbox{sinogram}(R)\|^2,
\end{equation}
where \text{sinogram}(R) and \text{sinogram}(${u^h}$) are real matrix of size $N_s\times N_\omega $.

The similarity measure $\DC_{R^\#RSSD}$ is discretized as:
\begin{align}
	\DC^h_{R^\#RSSD}[R,T;u^h] :=\dfrac{1}{2}\cdot\dfrac{4}{N^2}\sum_{i,j=1}^{N} &\big(\,\RC^\#(\RC\big(T_{u^h}\big))(i,j)-\RC^\#(\RC(R))(i,j)\,\big)^2
	\label{eq:RSSD_Discrete2}
\end{align}
where $u^h,R,T, T_{u^h}$  and $N$ are the same as before.

The term $D^h_{R^\#RSSD}$ can be calculated directly from the sinograms of $R$ and $T_u^h$, as follows:
\begin{equation}
	\label{eq:RSSDFromSinogram2}
	\DC^h_{R^\#RSSD}[R,T;u^h] =\dfrac{1}{2}\cdot \dfrac{4}{N^2}\, \|\mbox{BP}(\mbox{sinogram}(T_{u^h}))-\mbox{BP}(\mbox{sinogram}(R))\|^2,
\end{equation}
where BP(\text{sinogram}(R)) and BP(\text{sinogram}(R)) represent real matrices of size $N\times N $,  corresponding to the back-projection of the sinograms. Accordingly, the functionals $\JC_{RSSD}[u]$ and $\JC_{R^\#RSSD}[u]$ can be discretized. Since the measures $\DC^h_{SSD}$, $\DC^h_{RSSD}$ and $\DC^h_{R^\#RSSD}$ have different ranges and are dissimilar, scaling adjustment factors were been integrated into the similarity measures. This enables the regularization parameters of the three methods to be set within a narrow range. It is noteworthy that these modifications do not affect the effectiveness of the methods.

In terms of images and sinograms, the following experiments use grayscale images of size $128\times128$ pixels are utilized, with intensities in the range $[0,1],$ and with image domain $\Omega=[-1,1]^2$.  One of the main challenges in this work is to perform registrations using noisy images, such as medical images. For this reason, this work considers images with two levels of noise: a \textit{low noise} of type white Gaussian with mean 0 and variance $0.05^2$ and a \textit{high noise} of type white Gaussian with mean 0 and variance $0.1^2$.

To discretize the Radon transform, the variable $s$ was considered to be in the interval $[-1,1]$, with $185$ equidistant distances, while the variable  $\omega$ was considered to be in the interval $[0,\pi]$, with $360$ equidistant angles.  Consequently, the values of $h_s$ and $h_\omega$ are given by  $2/128$ and $\pi/180$, respectively.

The numerical computations are done using the \texttt{Matlab R2022B} software. The \texttt{radon} script function is used to construct the sinograms. Subsequently, the back projection of the sinograms is constructed using the \texttt{iradon} script function with the configuration \texttt{filter=none}

To minimize the DIR problem, two existing Matlab programs are employed: The optimization algorithms used are  \texttt{fminunc} and \texttt{fmincon}. By default, the first uses an algorithm  \texttt{quasi-newton} type, and the second uses an algorithm of type \ \texttt{ interior-point} type. Both methods use algorithms of type \texttt{bfgs} for the Hessian update.
Furthermore, to improve the methodology, the analytical gradient of both functionals has been incorporated. The discretization of these gradients was performed using the same techniques  described previously.

Finally, two standard measures are presented to compare the similarity of images. The root mean square error (RMSE) and the norm of the difference of the deformation fields computed over the object of interest.
The RMSE is a statistical measure used to quantify the discrepancy between two variables. In the case of square images of equal size,  $N\times N$, it is defined by:
$$RMSE(R,T) = \sqrt{\dfrac{1}{N^2} \dsum_{i,j=1}^N(R_{ij}-T_{ij})^2},$$
where $R_{ij}$ and $T_{ij}$ are the pixels at  position $(i,j).$ 
A RMSE value equal to zero indicates that the images are identical. 

Let $F=(F_1,F_2)$ be a referential deformation field and $G=(G_1,G_2)$ be the registration deformation field. The norm of the difference of the deformation fields computed over the object of interest is defined as follows:
$$\|F-G \|= \sqrt{\|F_{1}-G_1\|^2_2 + \|F_2-G_2\|^2_2},$$
where $F_i$ and $G_i$ are the $i$-th component of vectorial fields $F$ and $G$, respectively.

\section{Numerical Tests}
\label{sec:ExperimentalResults}

The following section presents five tests that examine the performance of the three methods discussed in this paper. The registration of different types of non-affine deformations is demonstrated in both noisy and non-noisy images. The selected tests are:

\begin{description}
\item[Test A:] Registration of non-affine deformations in noise-free images.
\item[Test B:] Registration of non-affine deformations in images with random noise.
\item[Test C:] Convergence rates in the non-affine registration of noise-free images.
\item[Test D:] Convergence rates in the non-affine registration of noisy images.
\item[Test E:] Application to the lung deformation.
\end{description}

\subsection{Test A: Registration of non-affine deformations in noise-free images}

\begin{figure}[t!]
	\centering
	\includegraphics[width=0.99\linewidth]{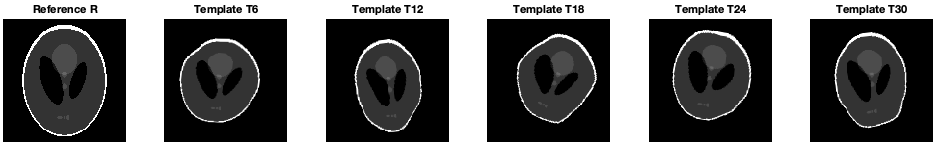}
	\caption{{\textbf{Sampling of images considered in Test A:} Reference R and a sample of five of the random Template images  T}}
	\label{fig:imnonoise}
\end{figure}

\begin{figure}[t!]
	\centering
	\includegraphics[width=1\linewidth]{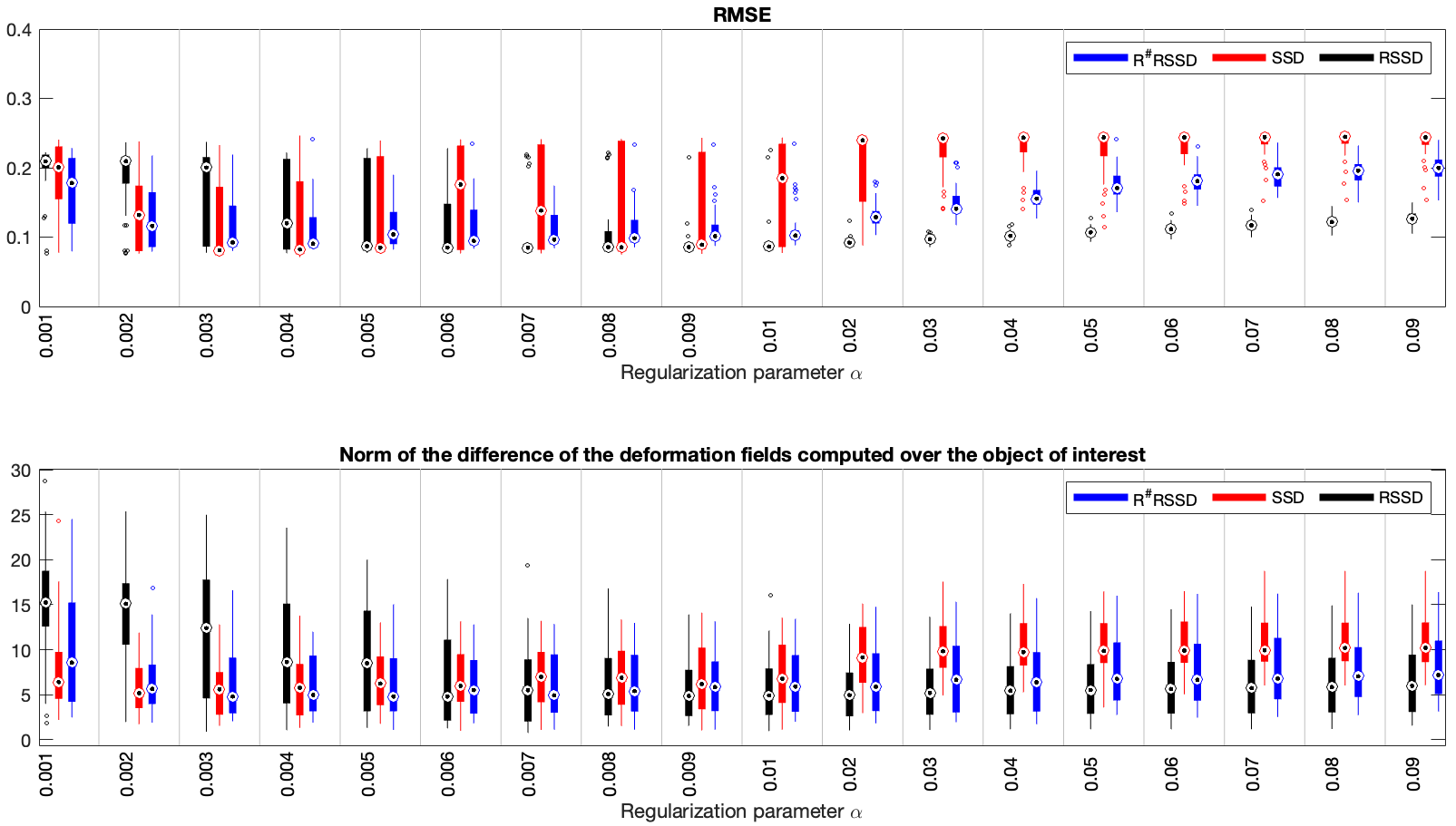}
	\caption[]{\textbf{
 Results of the registration of thirty random deformations  in noise-free images.} 
	The best regularization parameters for each method were as follows: RSSD: $\alpha= 0.02,$  SSD: $\alpha= 0.003,$ y  R$^\#$SSD: $\alpha= 0.007$.
	}
	\label{fig:alphadet}
\end{figure}

In this experiment, the efficacy of the three methods was evaluated using a coarse mesh and thirty non-affine deformation fields on images devoid of noise. 
Shep-Logan phanom of size $128 \times 128$ was considered as reference image $R$.

A total of thirty random deformations of the reference image, designated as template image T, were generated with the same dimensions as R. The methodology employed for this purpose is outlined below. Initially, a global affine deformation with inhomogeneous scaling in the principal directions was applied, with aleatory values within the interval $[0.69,0.91]$. This was achieved through a random rotation within the interval $[-30^\circ,30^\circ]$ and an arbitrary translation of up to nine pixels in all directions.

In order to apply further random deformations subsequent to the aforementioned affine deformation, an additional triangular mesh with 40 nodes was employed. This process resulted in the generation of local deformations. Figure \ref{fig:imnonoise} illustrates an example of these deformations.

Figure \ref{fig:alphadet} illustrate the results of registering these thirty noise-free images. The top graph depicts the box-and-whisker plots of the RMSE measurements obtained by comparing the reference image R with the registered images from the 30 aforementioned experiments.  The RSSD method yields registrations with lower and more homogeneous RMSE values, with a low number of outliers. 
In the case of the SSD method, it can be observed that the RMSE values are generally heterogeneous. However,  for certain $\alpha$ values in the interval $[0.003, 0.005]$, the median values are acceptable. In comparison to the SSD method, the R$^\#$SSD method demonstrates superior performance, although not at the level of the RSSD method.

Given that RSME is a global error metric that does not provide additional information regarding potential local discrepancies, it was determined that incorporating a second metric would be advantageous for more accurately assessing the efficacy of the methods. To this end, the aforementioned norm of the difference of the deformation fields computed over the object of interest was considered. This second metric compares the estimated fields with the real deformation field, thus providing a more precise evaluation of the performance of the methods.
It can thus be verified that the deformation fields obtained through the RSSD method are in agreement with those derived from RMSE for alpha values within the range  $[0.007,0.03]$. The best results were achieved for  $\alpha_{RSSD} = 0.02,$  exhibiting a low median and high homogeneity.

For SSD, it can be observed that the deformation fields effectively align with the RMSE results within the interval  $[0.003, 0.005]$. The best results were obtained for  $\alpha_{R^\#RSSD} = 0.007$, exhibiting a low median and homogeneity of values for both measurements. 
It can be observed that at least half of the registrations yield satisfactory results. However, it is also evident that a considerable number of errors are present.

In the case of R$^\#$SSD, it can be observed that the deformation fields align with the RMSE results within the interval  $[0.003,0.01].$
 It can be noted that it yields a greater number of acceptable registrations than SSD, achieving the best results for $\alpha_{ SSD} = 0.003,$ with a low median and homogeneity of values for both measures.

\begin{figure}[t!]
	\centering
	\includegraphics[width=0.99\linewidth]{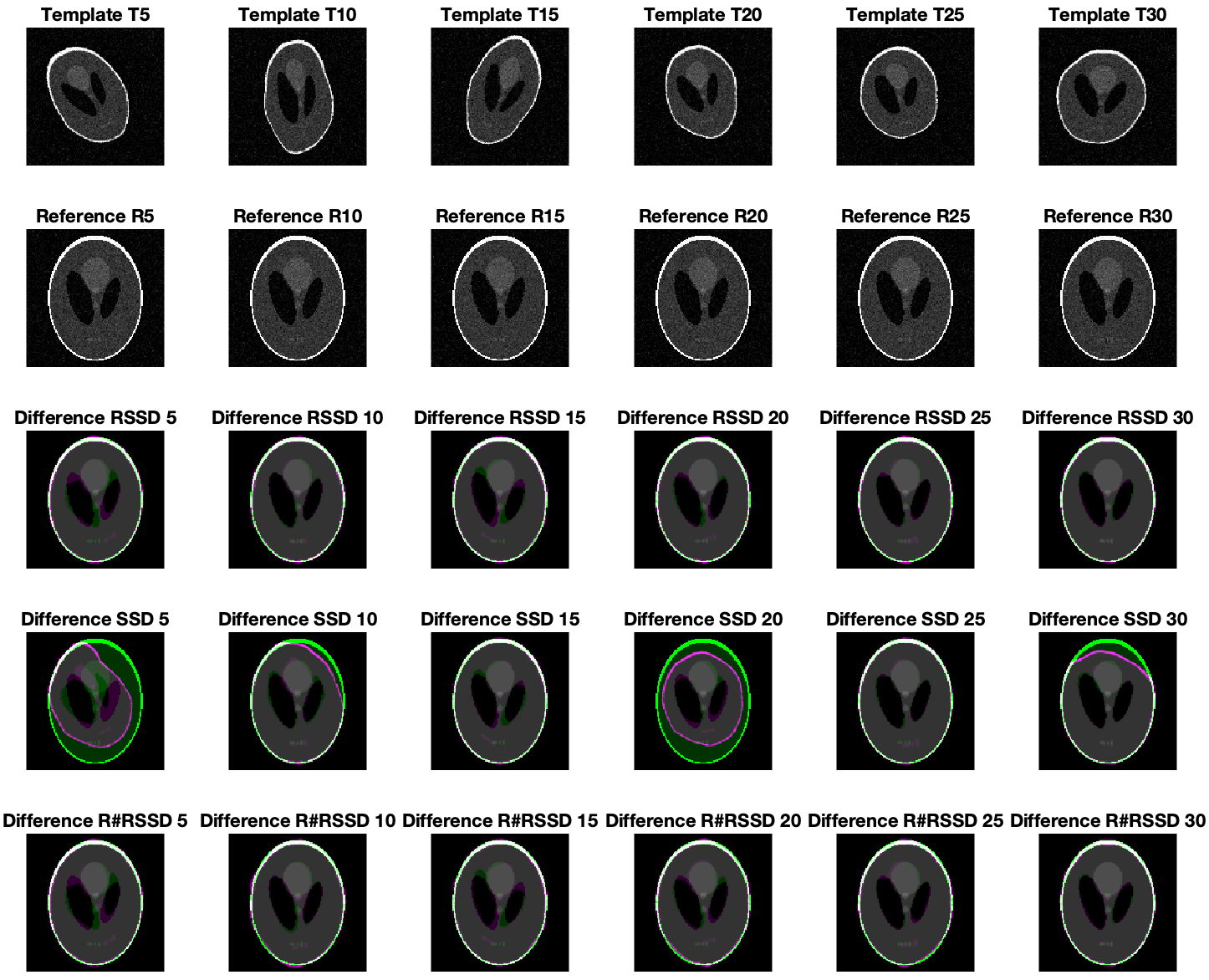}
	\caption{{\textbf{Sampling of images contained in Test B} - Low Noise:  The first row displays the template T images from a sampling of thirty high-noise images. The second row displays their respective six reference images with different seed noise. 
	The third, fourth, and fifth rows displays the difference between the reference image R and the registration image from the RSSD, SSD, and R$^\#$SSD methods, respectively, with the noise removed for comprehension.
	}}
	\label{fig:SampleNoise5}
\end{figure}

With regard to the deformation fields, there is a clear alignment between the RMSE results and the deformation fields in the interval $[0.003, 0.005].$ The best results are observed for the value of  $\alpha_{R^\#RSSD} = 0.007$, which demonstrates low medians and homogeneity for both measurements. While at least half of the registrations yielded positive results, there was a notable prevalence of failures.

In the case of R$^\#$SSD, the deformation fields align the RMSE results in the interval $[0.003,0.01].$ Consequently, it can be stated that R$^\#$SSD yields a more greater number of acceptable registrations in comparison to SSD. The optimal results are observed for $\alpha_{SSD} = 0.003,$ which demonstrates low medians and homogeneity for both measures.

\begin{figure}[t!]
	\centering
	\includegraphics[width=0.99\linewidth]{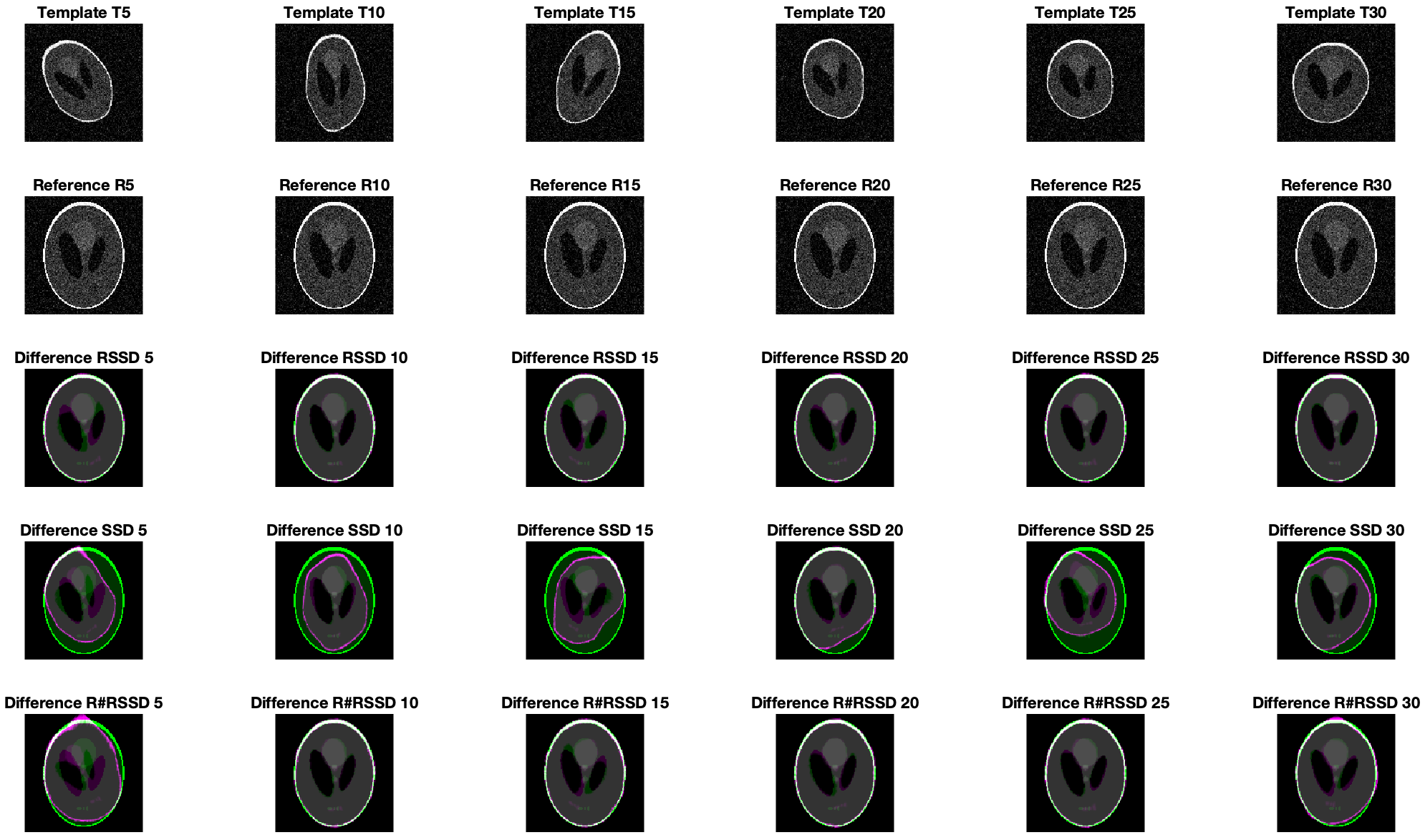}
	\caption{{\textbf{Sampling of images included in Test B  - high noise:} 
The first row displays the template T images from a sample of thirty high-noise images. The second row displays their respective six reference images with different seed noise.
The third, fourth, and fifth rows displays the difference between the reference image R and the registration image from the RSSD, SSD, and R$^\#$SSD methods, respectively, with the noise removed for comprehension.
		 }}
	\label{fig:SampleNoise10}
\end{figure}

\subsection{Test B: Registration of a non-affine deformation in images with random noise}

The objective of this experiment was to assess the performance of the three methods when registering thirty non-affine deformations on images with varying levels of noise. In this experiment, the same thirty R and T images utilized in the preceding experiment were considered. A random noise seed was generated for each, and they were added together to create thirty noisy images.
Examples of these noisy images are provided in Figures \ref{fig:SampleNoise5} and \ref{fig:SampleNoise10}. The registrations of the three methods were calculated using the coarse mesh.

The regularization parameter values used for this test were identical to those that produced the best results for each method in Test A. 
The results of the registrations obtained between noisy images in this test are presented in Figure \ref{fig:noisedRMSE}. 
Given that noise in images typically affects the similarity measures, such as the $RMSE(R,T_u)$, the deformation fields derived from the registrations were applied to the noise-free base images to obtain similarity measures comparable to those obtained previously.

The top graph of the Figure \ref{fig:noisedRMSE} illustrate the $RMSE(R,T_u)$ values obtained by the three methods when comparing the registered image with the reference image.
The graph demonstrates that the RSSD method, subsequently followed by  R$^\#$SSD, yields significantly lower and more homogeneous values for the registrations on images with and without noise than the other two methods. 
The low norm values between the real and estimated deformation fields in the bottom graph also serve to corroborate this conclusion.

\begin{figure}[t]
	\centering
	\includegraphics[width=0.99\linewidth]{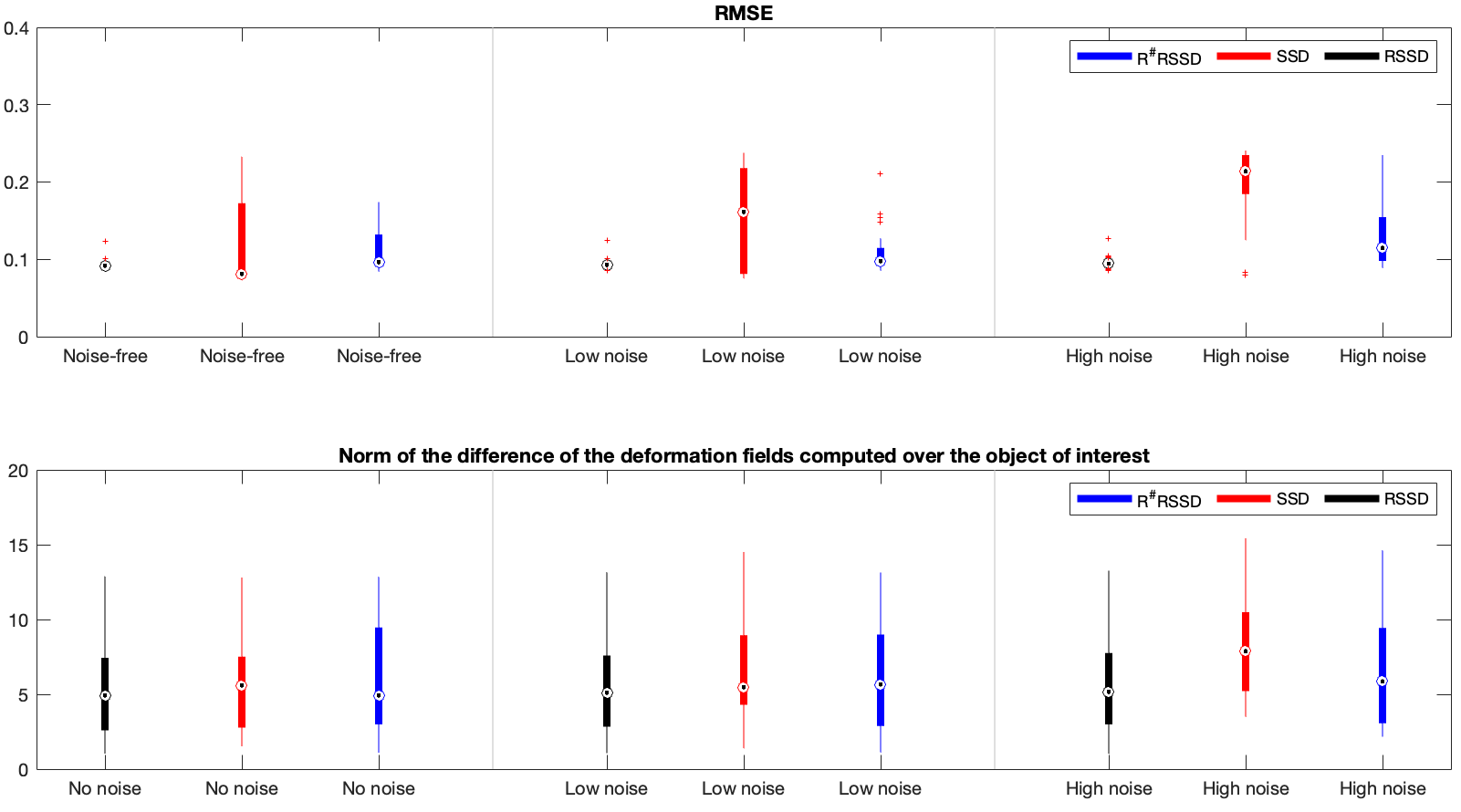}
	\caption[]{
\textbf{Results of the registration of 30 deformation images computed on noisy images.}
The distribution of the 30 values of the $RMSE(R,T_u)$ is illustrated in the top graph. 
The distribution of 30 difference norms of the difference of the deformation fields computed over the object of interest is presented in the bottom graph.}
\label{fig:noisedRMSE}
\end{figure}

\subsection{Test  C:  Convergence rates in the non-affine registration of noise-free images}

\begin{figure}[h]
	\centering
	\includegraphics[width=0.99\linewidth]{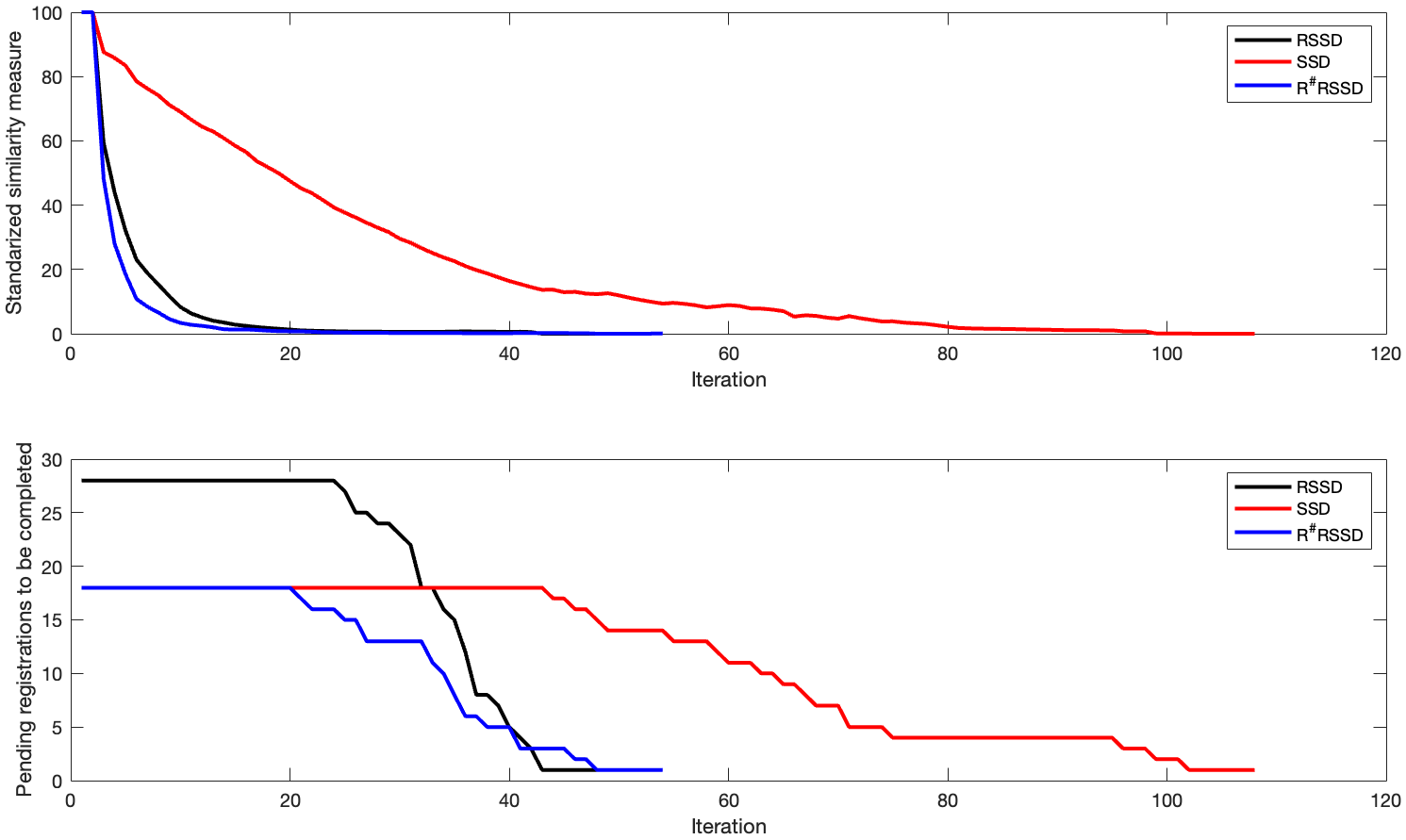}
	\caption{The top graph illustrates the convergence speed of the non-affine registration of noise-free images, whereas the bottom graph  illustrates the number of registrations pending completion in each iteration.}
	\label{fig:ConvRat_Noise-free}
\end{figure}

In this test, the convergence rates of the three methods were computed based on the results of Test A.
To ensure a valid comparison between the convergence rates of the three methods, the values provided by the similarity measures RSSD, SSD, and R$^\#$RSSD in each registration and iteration were normalized. 
 
In Figure \ref{fig:ConvRat_Noise-free} the top graph the mean values of the similarity measures of successful registrations (registrations with $RMSE(R,T_u)<0.1.$) over the course of each iteration. 
The unsuccessful results were not considered because they reached a local minimum and concluded prematurely, exhibiting a higher RMSE value.
The graph illustrates the exponential convergence of the RSSD and R$^\#$RSSD methods in comparison to the polynomial convergence of the SSD method. Once more, the RSSD and R$^\#$RSSD methods complete the required calculations in a smaller number of iterations.

The number of pending registration processes, beginning with the number of successful registrations, can be observed in the bottom graph. The graph illustrates that the RSSD method begins with 28 pending processes, while the SSD and R$^\#$RSSD methods begin with 18.  It can be observed that the totality of processes, both RSSD and R$^\#$RSSD, complete their iterations in less than 60 instances, whereas SSD requires a value greater than 100 iterations.
The graphs illustrate that the Radon transform-based methods reduce the solution exponentially, with this reduction occurring in fewer iterations, with the majority of registrations completed in less than 40 iterations.

\subsection{Test  D:  Convergence rates in the non-affine registration of noisy images}
 
The results of this test are analogous to those of Test C, but in this case, they are considered in the context of the experiments performed in Test B, namely, non-affine random deformations with low and high Gaussian noise. Once more, only successful registrations were considered for this experiment,   with $RMSE(R,T_u)<0.1$. 

In the case of low noise with variance equal to $0.05^2$, the upper plot of Figure \ref{fig:convratnoise-5} illustrates the convergence rates of the three methods. Additionally, the number of remaining processes is indicated at the bottom of the page.
As can be observed, the number of successful processes for the RSSD, SSD, and R$^\#$RSSD methods were 26, 12, and 18, respectively. The convergence speed was again exponential for the RSSD and R$^\#$RSSD methods, finishing most of them in less than 40 iterations. In contrast, the convergence speed for SSD was also polynomial, finishing most of them in over 60 iterations.

In the case of high noise with  $Var=0.1^2$, the upper graph of Figure \ref{fig:convratnoise-10} illustrates the convergence speeds of the three methods. The lower graph displays the number of processes that were completed.
As previously observed, the number of successful processes for the RSSD, SSD, and R$^\#$RSSD methods was 26, 2, and 10, respectively. The convergence speed was again exponential for the RSSD and R$^\#$RSSD methods, with the majority of processes completed in less than 40 iterations.  In contrast, the convergence speed for the two successful cases of the SSD method was also polynomial, with both processes completed in over 80 iterations.
In the latter case, it can be observed that the RSSD method exhibits a superior performance in registering images with high noise, in comparison to the 6.7\% success rate observed for the SSD method.

\begin{figure}[h]
	\centering
	\includegraphics[width=0.99\linewidth]{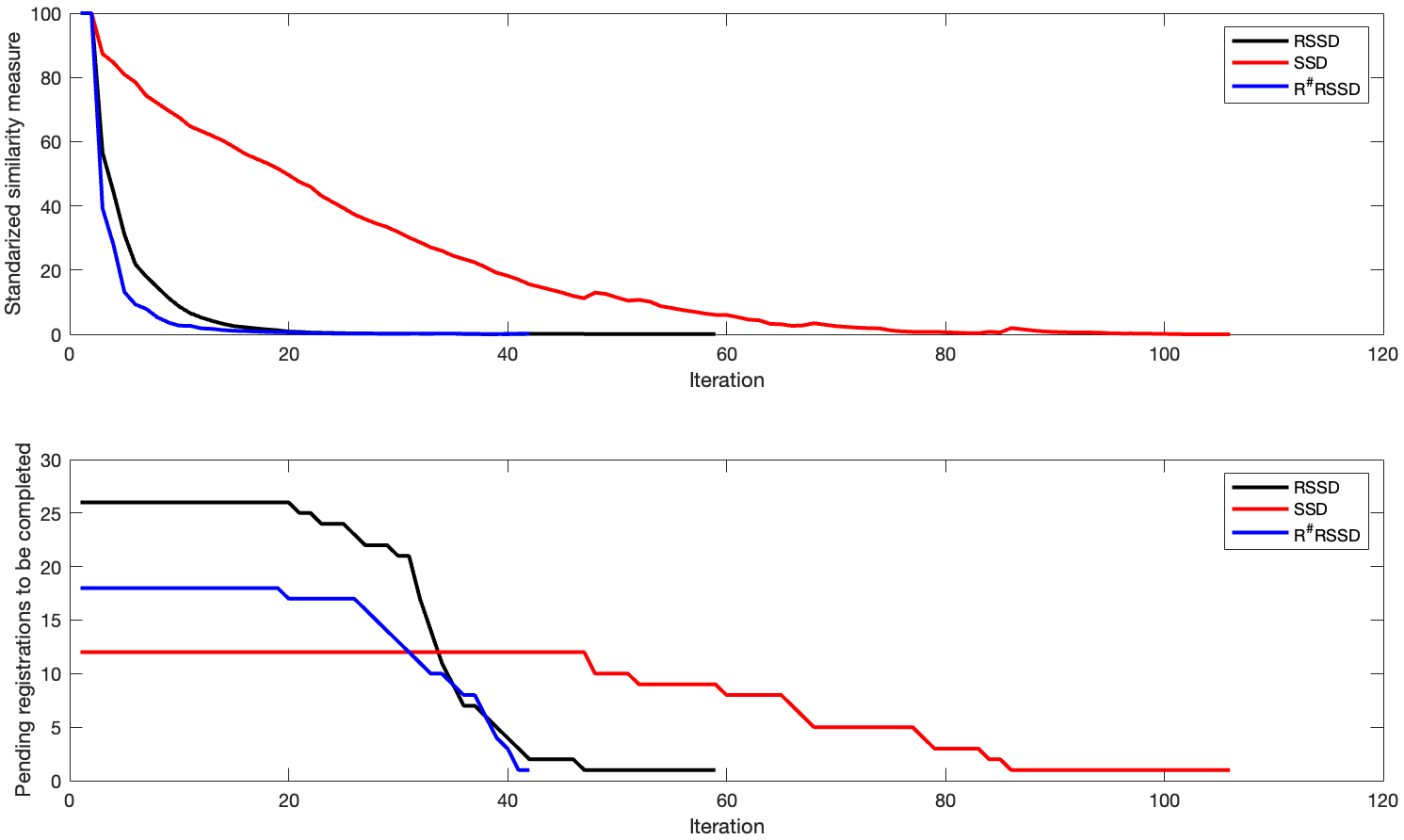}
	\caption{Convergence speed in the top graph and registrations pending completion  in the bottom graph, in the non-affine registration of images  with low noise}
	\label{fig:convratnoise-5}
\end{figure}

\newpage
\begin{figure}[h!]
	\centering
	\includegraphics[width=0.99\linewidth]{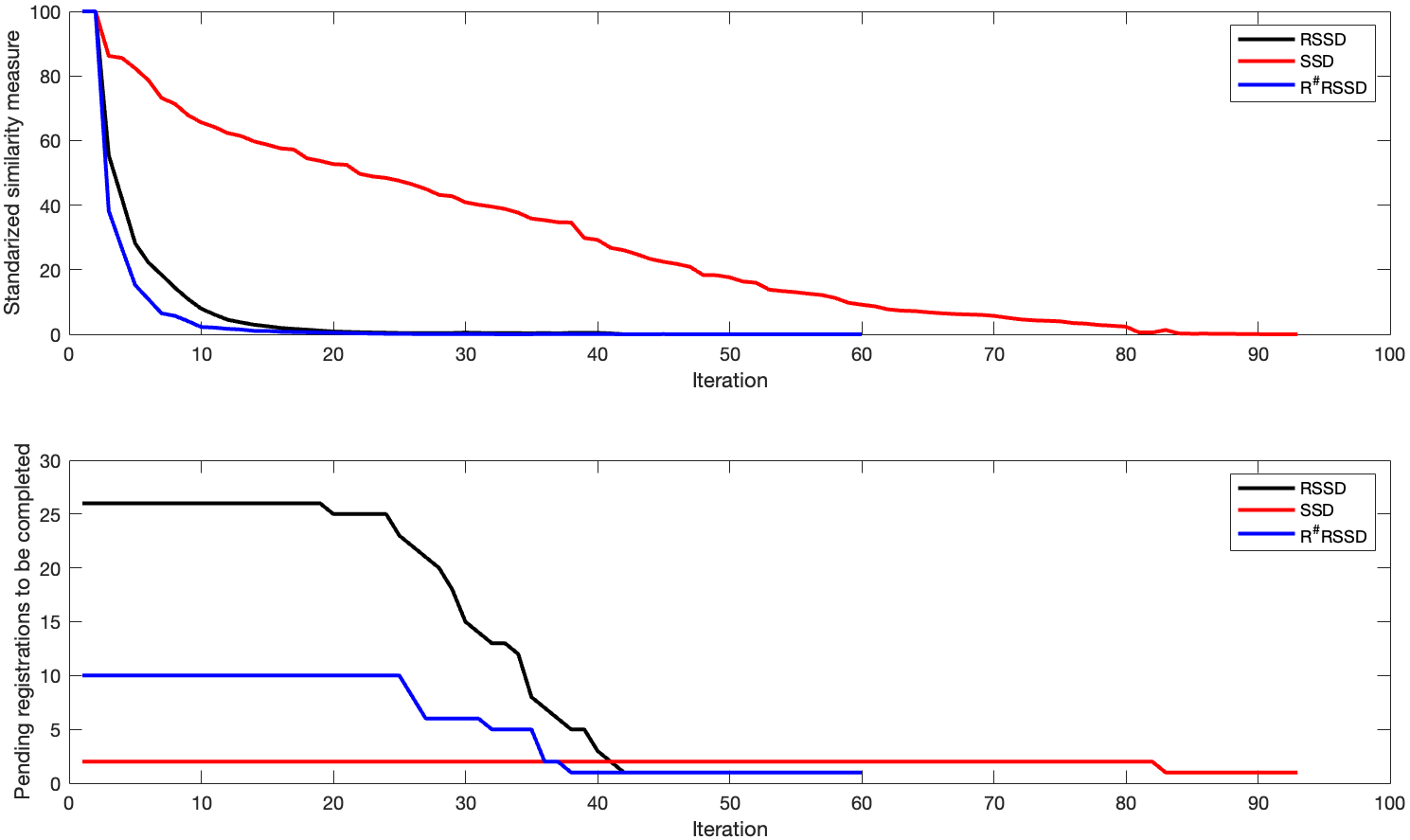}
	\caption{Convergence speed in the top graph and registrations pending completion  in the bottom graph, in the non-affine registration of images  with high noise}
	\label{fig:convratnoise-10}
\end{figure}

\begin{figure}[t!]
	\centering
	\includegraphics[width=0.99\linewidth]{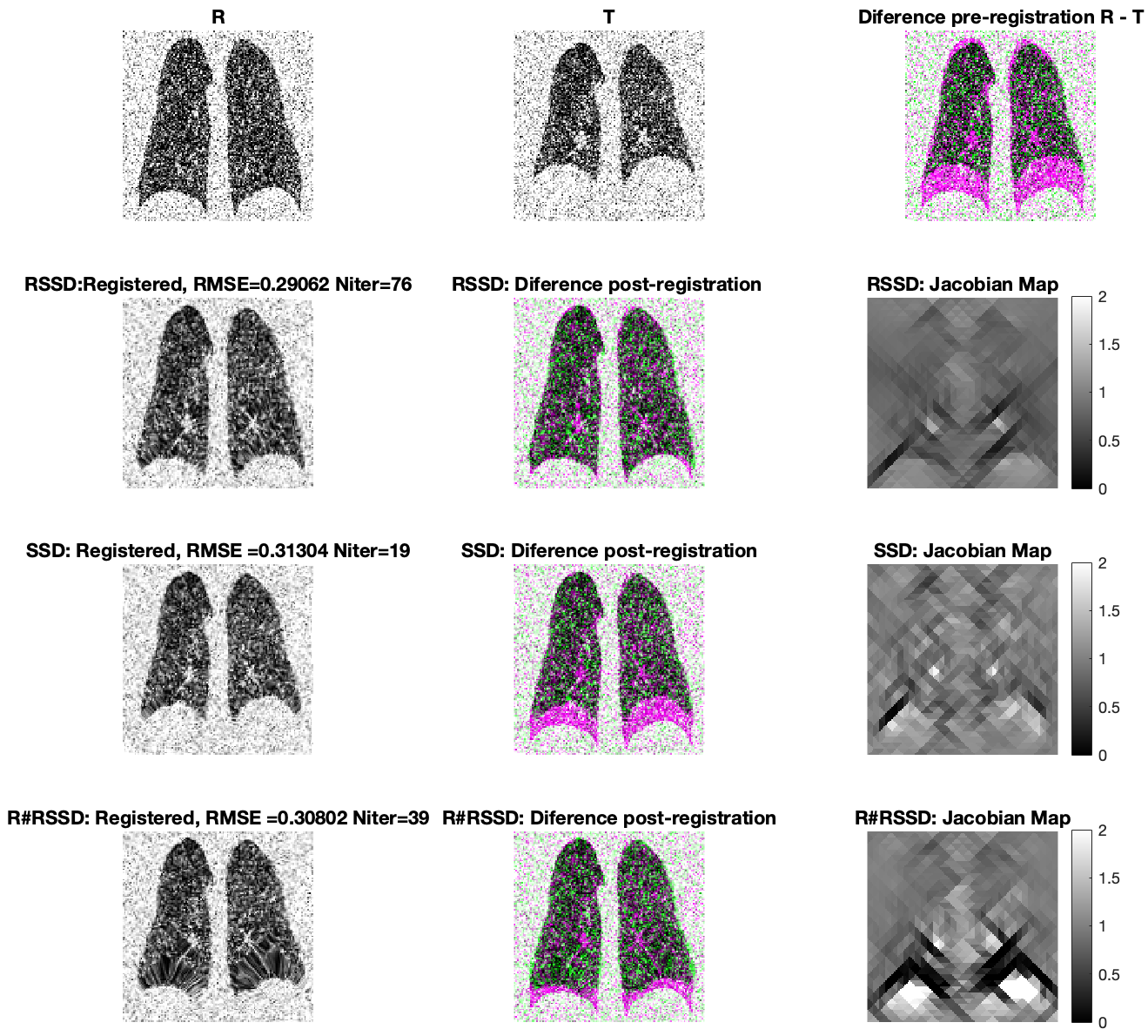}
	\caption{Results of image registration of noisy lung images. The first row shows the reference R image, the template T image, and the initial difference between them.
The second row shows the results provided by the RSSD method, the third row shows the results provided by the SSD method, and the fourth row shows the results provided by the R$^\#$RSSD method.}
	\label{fig:lungaplication}
\end{figure}

\subsection{Example E: Application to lung deformations}
This example shows the initial motivation for this work, which was to improve existing lung image registration methods.
Figure \ref{fig:lungaplication} illustrates the progression of images from left to right. The first row displays the reference R image, the template T image, and the initial R-T difference image, which has been subjected to the addition of high Gaussian noise, as is typical of real medical images. We are grateful to the UC Computational Medicine Lab for providing these images.
The results of the three registration methods are presented in the second, third, and fourth rows, respectively. The second row displays the results of the RSSD method, the third row shows the SSD method, and the fourth row presents the R$^\#$RSSD method. 
The images in the first column displays the registrations with the corresponding  $RMSE(R,T_u)$ result. The images in the second column illustrate the differences between the registered images and the reference image R. The images in the third column display the Jacobian maps on the triangulations, with color indicating the variations in area for each triangle in the post-registration domain.
In this experiment, the registrations of the three methods were calculated using the fine mesh with the same regularization parameters. This was necessary to ensure that the deformations could be captured with sufficient detail.

It can be observed that in this case of lung imaging with high noise, the RSSD method was the only one that achieved satisfactory registration. 
Although the RMSE values are similar among the three methods, it is demonstrated that, with the difference image, the RSSD method was the only one to achieve an appropiate registration. 
More precisely, the Jacobian map indicates that the lung exhibited the expected deformations in the intended areas.

\section{Discussion}
\label{sec:Discussion}

Previous studies have evidenced the widespread application of intensity-based similarity measures in DIR methods due to their simple implementation and their efficacy in capturing small affine deformations. The SSD similarity measure exemplifies this phenomenon \cite{ashburner1999nonlinear,brown1992survey,capek1999optimisation,modersitzki2004numerical,wachinger2012simultaneous}. 
The aforementioned methods are not without defects. They are unsuitable for capturing large deformations and require similar or correlated intensities in the images for successful registration. Furthermore, the potential for false matches in the presence of noise must be considered \cite{BAJCSY19891,zitova2003image}.

In recent decades, registration methods that incorporate some form of projection into their formulation have gained popularity due to their robustness in registering noisy images \cite{albu2014transformed,BAJCSY19891,cain2001projection,khamene2006novel,sauer1996efficient}. In particular, methods that employ the Radon transform have demonstrated efficacy in accurately and rapidly computing global affine deformations \cite{mooser2013estimation,mooser2009estimation,nacereddine2015similarity,yan2005edge}.
The formulations of these methods are based on the linearity and homogeneity properties of the Radon transform, which limits their efficacy in capturing locally affine or non-affine deformations. The potential for extensions to these cases has yet to be fully explored.

In this study, two similarity measures based on the Radon transform and its inverse were proposed, designated as $RSSD$ and $R^\#RSSD$, respectively. Both methods were configured with the LEE regularizer and a deformation model analogous to linear finite elements was incorporated. Additionally, a Quasi-Newton optimization method was utilized. As a contrasting model, a method utilizing the SSD similarity measure and the remaining configurations employed in the proposed methods was considered.

The results demonstrated the existence and uniqueness of the solution to the proposed DIR problems. Furthermore, experimental analysis indicates that the proposed methods exhibit lower and more homogeneous errors than the contrast method when registering non-affine deformations between noisy and non-noisy images. 
In terms of convergence, the proposed methods exhibited an exponential rate of convergence, reaching the solution in a reduced number of iterations compared to the SSD contrast approach. As documented in \cite{wachinger2012simultaneous}, the SSD method also exhibited a polynomial trend in convergence. Furthermore, when registering a lung medical imaging case with high levels of noise and local non-affine deformations, the proposed methods demonstrated superior performance.
The results presented in this study are innovative in that they extend the simplicity of intensity-based methods and the robustness against noise of projections-based methods to the development of new registration methods that enable the identification of both global and local non-affine deformations between images, even in cases of highly noisy images. 
Another interesting aspect of this work is that it not only presents a well-performing method, but also provides conditions that guarantee the existence and uniqueness of the solutions reached.
From a practical standpoint, these advances present a significant opportunity to enhance image matching techniques across various fields of knowledge, including the diagnosis of diseases related to lung tissue stiffness. In this context, the registration of non-affine deformations on noisy images represents a pivotal step in computing tissue stresses \cite{concha2018micromechanical,Cruces2019mapping,hurtado2017spatial,retamal2018does}.

It is important to note that the proposed methods present certain limitations related to their configurations and implementation processes. One such limitation is that the procedure for choosing the regularization constant of the method is obtained heuristically and manually incorporated into the program. Furthermore, the numerical implementation of the proposed methods may be costly due to their reliance on images in their original size, uniform meshes throughout the domain, and explicit gradients calculated based on centered finite differences in the optimization method.
From this, opportunities for improving the proposed algorithms arise. These include the automation of the regularization constant search process, the implementation of a pyramidal approach for better performance, the incorporation of subsampling in the projections used, the automatic generation of adaptive meshes to achieve localized refinement in areas of finer details, the development of an optimized method for computing the explicit gradient, and the adaptation of the proposed methods for the case of non-affine registration in 3D.

\section*{Acknowledgments}
Daniel Hurtado thanks the funding of ANID-Fondecyt Regular \#1220465.
Axel Osses thanks the funding of ANID-Fondecyt Regular \#1201311, CMM FB210005 Basal, FONDAP/15110009, ACIPDE MATH190008, Millennium Program NCN19-161and Data Observatory ANID Technology Center DO210001. \newline Rodrigo Quezada thanks the funding of ANID BECAS/DOCTORADO NACIONAL/ 2016-21160751.

\section*{References}
\bibliography{references}

\end{document}